\newlength\titlebox \setlength\titlebox{2.375in}
\begin{document}

\title{Stochastic Deep Networks}

\author{ 
\begin{tabular}{c}
Gwendoline de Bie \\ DMA, ENS \\ {\small \url{gwendoline.de.bie@ens.fr}}
\end{tabular}
\quad 
\begin{tabular}{c}
Gabriel Peyr\'e \\ CNRS \\ DMA, ENS \\ {\small \url{gabriel.peyre@ens.fr}}
\end{tabular}
\quad 
\begin{tabular}{c}
Marco Cuturi \\ Google Brain \\ CREST ENSAE\\ {\small \url{cuturi@google.com}}
\end{tabular}
}

\maketitle 


\begin{abstract}
Machine learning is increasingly targeting areas where input data cannot be accurately described by a single vector, but can be modeled instead using the more flexible concept of \emph{random} vectors, namely probability measures or more simply point clouds of varying cardinality. 
Using deep architectures on measures poses, however, many challenging issues. Indeed, deep architectures are originally designed to handle fixed-length vectors, or, using recursive mechanisms, ordered sequences thereof. 
In sharp contrast, measures describe a varying number of weighted observations with no particular order. 
	We propose in this work a deep framework designed to handle crucial aspects of measures, namely permutation invariances, variations in weights and cardinality.
	Architectures derived from this pipeline can \begin{enumerate*}[label=(\roman*)]
	\item map measures to measures --- using the concept of push-forward operators;
	\item bridge the gap between measures and Euclidean spaces --- through integration steps.
\end{enumerate*} 
	This allows to design discriminative networks (to classify or reduce the dimensionality of input measures), generative architectures (to synthesize measures) and recurrent pipelines (to predict measure dynamics).
	We provide a theoretical analysis of these building blocks, review our architectures' approximation abilities and robustness w.r.t. perturbation, and try them on various discriminative and generative tasks. 

\end{abstract}


\section{Introduction}

Deep networks can now handle increasingly complex structured data types, starting historically from images~\citep{imagenet} and speech~\citep{speech} to deal now with shapes~\citep{shapenets}, sounds~\citep{audio}, texts~\citep{document} or graphs~\citep{graph}. In each of these applications, deep networks rely on the composition of several elementary functions, whose tensorized operations stream well on GPUs, and whose computational graphs can be easily automatically differentiated through back-propagation. Initially designed for vectorial features, their extension to \emph{sequences} of vectors using recurrent mechanisms~\citep{hochreiter1997long} had an enormous impact.  

Our goal is to devise neural architectures that can handle \emph{probability distributions} under any of their usual form: as discrete measures supported on (possibly weighted) point clouds, or densities one can sample from. 
	Such probability distributions are challenging to handle using recurrent networks because no order between observations can be used to treat them recursively (although some adjustments can be made, as discussed in~\citealt{44871}) and because, in the discrete case, their size may vary across observations. There is, however, a strong incentive to define neural architectures that can handle distributions as inputs or outputs.
	%
	This is particularly evident in computer vision, where the naive representation of complex 3D objects as vectors in spatial grids is often too costly memorywise, leads to a loss in detail, destroys topology and is blind to relevant invariances such as shape deformations. These issues were successfully tackled in a string of papers well adapted to such 3D settings~\citep{qi2016pointnet,qi2017pointnet++,fan2016point}. In other cases, ranging from physics \citep{godin2007measuring}, biology \citep{grover2011measuring}, ecology \citep{tereshko2000reaction} to census data \citep{guckenheimer1977dynamics}, populations cannot be followed at an individual level due to experimental costs or privacy concerns. In such settings where only macroscopic states are available, \textit{densities} appear as the right object to perform inference tasks.

\subsection{Previous works}
\paragraph{Specificities of Probability Distributions.} 


Data described in point clouds or sampled i.i.d. from a density are given \textit{unordered}. Therefore architectures dealing with them are expected to be \textit{permutation invariant}; they are also often expected to be equivariant to geometric transformations of input points (translations, rotations) and to capture \textit{local structures} of points. Permutation invariance or equivariance \citep{ravanbakhsh2016deep, ravanbakhsh2017deepsets}, or with respect to general groups of transformations \citep{gens2014symmetry, cohen2016group, ravanbakhsh2017equivariance} have been characterized, but without tackling the issue of locality. Pairwise interactions \citep{chen2014unsupervised, cheng2016deep, guttenberg2016permutation} are appealing and helpful in building permutation equivariant layers handling local information. Other strategies consist in augmenting the training data by all permutations or finding its \textit{best ordering} \citep{vinyals2015order}. \citep{qi2016pointnet, qi2017pointnet++} are closer to our work in the sense that they combine the search for local features to permutation invariance, achieved by max pooling. 

\paragraph{(Point) Sets \emph{vs.} Probability (Distributions).} An important distinction should be made between point \emph{sets}, and point \emph{clouds} which stand usually for discrete probability measures with uniform masses. The natural topology of (point) sets is the Hausdorff distance. That distance is very different from the natural topology for probability distributions, that of the convergence in law, \emph{a.k.a} the weak$^*$ topology of measures. The latter is metrized (among other metrics) by the Wasserstein (optimal transport) distance, which plays a key role in our work. This distinction between sets and probability is crucial, because the architectures we propose here are designed to capture stably and efficiently regularity of maps to be learned with respect to the convergence in law. Note that this is a crucial distinction between our work and that proposed in PointNet \citep{qi2016pointnet} and PointNet++ \citep{qi2017pointnet++}, which are designed to be smooth and efficients architectures for the Hausdorff topology of point sets. Indeed, they are \emph{not} continuous for the topology of measures (because of the max-pooling step) and cannot approximate efficiently maps which are smooth (e.g. Lipschitz) for the Wasserstein distance.

\paragraph{Centrality of optimal transport.}
The Wasserstein distance plays a central role in our architectures that are able to handle measures. Optimal transport has recently gained popularity in machine learning due to fast approximations, which are typically obtained using strongly-convex regularizers such as the entropy \citep{cuturi2013sinkhorn}. The benefits of this regularization paved the way to the use of OT in various settings \citep{courty2017optimal, rolet2016fast, huang2016supervised}. Although Wasserstein metrics have long been considered for inference purposes \citep{bassetti2006minimum}, their introduction in deep learning architectures is fairly recent, whether it be for generative tasks \citep{bernton:hal-01517550,arjovsky2017wasserstein,genevay2018learning} or regression purposes \citep{frogner2015learning, hashimoto2016learning}. The purpose of our work is to provide an extension of these works, to ensure that deep architectures can be used at a granulary level on measures directly. In particular, our work shares some of the goals laid out in \citep{hashimoto2016learning}, which considers recurrent architectures for measures (a special case of our framework). The most salient distinction with respect to our work is that our building blocks take into account multiple interactions between samples from the distributions, while their architecture has no interaction but takes into account diffusion through the injection of random noise.



\subsection{Contributions}

In this paper, we design deep architectures that can 
\begin{enumerate*}[label=(\roman*)]
	\item map measures to measures
	\item bridge the gap between measures and Euclidean spaces.
\end{enumerate*} 
They can thus accept as input for instance discrete distributions supported on (weighted) point clouds with an arbitrary number of points, can generate point clouds with an arbitrary number of points (arbitrary refined resolution) and are naturally invariant to permutations in the ordering of the support of the measure.
The mathematical idealization of these architectures are infinite dimensional by nature, and they can be computed numerically either by sampling (Lagrangian mode) or by density discretization (Eulerian mode). The Eulerian mode resembles classical convolutional deep network, while the Lagrangian mode, which we focus on, defines a new class of deep neural models.
Our first contribution is to detail this new framework for supervised and unsupervised learning problems over probability measures, making a clear connexion with the idea of iterative transformation of random vectors. These architectures are based on two simple building blocks: interaction functionals and self-tensorization. This machine learning pipeline works hand-in-hand with the use of optimal transport, both as a mathematical performance criterion (to evaluate smoothness and approximation power of these models) and  as a loss functional for both supervised and unsupervised learning.
Our second contribution is theoretical: we prove both quantitative Lipschitz robustness of these architectures for the topology of the convergence in law and universal approximation power. 
Our last contribution is a showcase of several instantiations of such deep stochastic networks for classification (mapping measures to vectorial features), generation (mapping back and forth measures to code vectors) and prediction (mapping measures to measures, which can be integrated in a recurrent network). 

%

\subsection{Notations} 

We denote $X \in \Rr(\RR^q)$ a random vector on $\RR^q$ and $\al_X \in \Mm_+^1(\RR^q)$ its law, which is a positive Radon measure with unit mass. It satisfies for any continuous map $f \in \Cc(\RR^q)$, $\EE(f(X)) = \int_{\RR^q} f(x) \d\al_X(x)$. Its expectation is denoted $\EE(X) = \int_{\RR^q} x \d\al_X(x) \in \RR^q$. 
We denote $\Cc(\RR^q;\RR^r)$ the space of continuous functions from $\RR^q$ to $\RR^r$ and $\Cc(\RR^q) \eqdef \Cc(\RR^q;\RR)$. 
In this paper, we focus on the \textit{law} of a random vector, so that two vectors $X$ and $X'$ having the same law (denoted $X \sim X'$) are considered to be indistinguishable.


\section{Stochastic Deep Architectures} \label{sec:description}

In this section, we define \emph{elementary blocks}, mapping random vectors to random vectors, which constitute a \emph{layer} of our proposed architectures, and depict how they can be used to build deeper networks.


\subsection{Elementary Blocks}

Our deep architectures  are defined by stacking a succession of simple elementary blocks that we now define.

\begin{defn}[Elementary Block] \label{defEB}
Given a function $f : \RR^q \times \RR^q \rightarrow \RR^r$, its associated elementary block 
$T_f : \Rr(\RR^q) \rightarrow \Rr(\RR^r)$ is defined as
\eql{\label{eq-elem-block-defn}
	\foralls X \in \Rr(\RR^q), \quad T_f(X) \eqdef \EE_{X'\sim X}( f(X,X') )
}
where $X'$ is a random vector independent from $X$ having the same distribution. 
\end{defn}

\begin{rem}[Discrete random vectors] 
A particular instance, which is the setting we use in our numerical experiments, is when $X$ is distributed uniformly on a set $(x_i)_{i=1}^n$ of $n$ points i.e. when $\al_X=\frac{1}{n}\sum_{i=1}^n \de_{x_i}$. In this case, $Y=T_f(X)$ is also distributed on $n$ points 
\eq{\label{eq-elem-block-defn-discr}
	\al_Y=\frac{1}{n}\sum_{i=1}^n \de_{y_i} \qwhereq 
	y_i=\frac{1}{n}\sum_{j=1}^n f(x_i,x_j).
} 
\end{rem}

This elementary operation~\eqref{eq-elem-block-defn-discr} displaces the distribution of $X$ according to pairwise interactions measured through the map $f$. 
As done usually in deep architectures, it is possible to localize the computation at some scale $\tau$ by imposing that $f(x,x')$ is zero for $\norm{x-x'} \geq \tau$, which is also useful to reduce the computation time. 




\begin{rem}[Fully-connected case] \label{rem-fully-connected}
%
As it is customary for neural networks, the map $f : \RR^{q} \times \RR^{q} \rightarrow \RR^{r}$ we consider for our numerical applications are affine maps composed by a pointwise non-linearity, i.e.
\eq{
	f(x,x') = ( \la( y_k  ) )_{k=1}^r 
	\qwhereq
	y = A \cdot [x;x'] + b \in \RR^r
}
where $\la : \RR \rightarrow \RR$ is a pointwise non-linearity (in our experiments, $\la(s)=\max(s,0)$ is the ReLu map). The parameter is then $\th=(A,b)$ where $A \in \RR^{ r \times 2q }$ is a matrix and $b \in \RR^{r}$ is a bias.  
\end{rem}

\begin{rem}[Deterministic layers] \label{random-det}
Classical ``deterministic'' deep architectures are recovered as special cases when $X$ is a constant vector, assuming some value $x \in \RR^q$ with probability 1, i.e. $\al_X=\de_x$.
A stochastic layer can output such a deterministic vector, which is important for instance for classification scores in supervised learning (see Section~\ref{sec:applications} for an example) or latent code vectors in auto-encoders (see Section~\ref{sec:applications} for an illustration).
In this case, the map $f(x,x')=g(x')$ does not depend on its first argument, so that $Y=T_f(X)$ is constant equal to $y=\EE_X(g(X))=\int_{\RR^q} g(x) \d\al_X(x)$.
Such a layer thus computes a summary statistic vector of $X$ according to~$g$.
\end{rem}

\begin{rem}[Push-Forward] \label{rmk-push-fwd}
	In sharp contrast to the previous remark, one can consider the case $f(x,x')=h(x)$ so that $f$ only depends on its first argument. One then has $T_f(X)=h(X)$, which corresponds to the notion of push-forward of measure, denoted $\al_{T_f(X)} = h_\sharp \al_X$. For instance, for a discrete law $\al_X=\frac{1}{n}\sum_i \de_{x_i}$ then $\al_{f_\sharp X}=\frac{1}{n}\sum_i \de_{h(x_i)}$. The support of the law of $X$ is thus deformed by $h$.
\end{rem}

\begin{rem}[Higher Order Interactions and Tensorization]
\label{rem-tensorization}
Elementary Blocks are generalized to handle higher-order interactions by considering $f : (\RR^q)^N \rightarrow \RR^r$, one then defines $T_f(X) \eqdef \EE_{X_2,\ldots,X_{N}}(f(X,X_2,\ldots,X_N))$ where $(X_2,\ldots,X_N)$ are independent and identically distributed copies of $X$. 
An equivalent and elegant way to introduce these interactions in a deep architecture is by adding a tensorization layer, which maps $X \mapsto X_2 \otimes \ldots \otimes X_N \in \Rr((\RR^q)^{N-1})$. 
Section~\ref{sec:theory} details the regularity and approximation power of these tensorization steps. 
\end{rem}


\subsection{Building Stochastic Deep Architectures}

These elementary blocks are stacked to construct deep architectures. A \emph{stochastic deep architecture} is thus a map
\eql{\label{eq-deep-archi}
	X \in \Rr(\RR^{q_0}) \mapsto Y= T_{f_T} \circ \cdots \circ T_{f_1} (X) \in \Rr(\RR^{q_{T}}), 
}
where $f_t : \RR^{q_{t-1}} \times \RR^{q_{t-1}} \to \RR^{q_{t}}$. Typical instances of these architectures includes: 
\begin{itemize}
	\item \textit{Predictive:} this is the general case where the architecture inputs a random vector and outputs another random vector. This is useful to model for instance time evolution using recurrent networks, and is used in Section~\ref{sec:applications} to tackle a dynamic prediction problem.
	\item \textit{Discriminative:} in which case $Y$ is constant equal to a vector $y \in \RR^{q_{T}}$ (i.e. $\al_{Y} = \de_{y}$) which can represent either a classification score or a latent code vector. Following Remark~\ref{random-det}, this is achieved by imposing that $f_T$ only depends on its second argument. 
	Section~\ref{sec:applications} shows applications of this setting to classification and variational auto-encoders (VAE). 
	\item \textit{Generative:} in which case the network should input a deterministic code vector $\tilde x_0 \in \RR^{\tilde q_0}$ and should output a random vector $Y$.
		This is achieved by adding extra randomization through a fixed random vector $\bar X_0 \in \Rr(\RR^{q_0-\tilde q_0})$ (for instance a Gaussian noise) and stacking $X_0=(\tilde x_0,\bar X_0) \in \Rr(\RR^{q_0})$. Section~\ref{sec:applications} shows an application of this setting to VAE generative models. Note that while we focus for simplicity on VAE models, it is possible to use our architectures for GANs~\cite{goodfellow2014generative} as well.  
\end{itemize}

\subsection{Recurrent Nets as Gradient Flows}
\label{sec-gradflows}

Following the work of~\cite{hashimoto2016learning}, in the special case $\RR^q=\RR^r$, one can also interpret iterative applications of such a $T_f$ (i.e. considering a recurrent deep network) as discrete optimal transport gradient flows~\citep{santambrogio2015optimal} (for the $W_2$ distance, see also Definition~\eqref{W_p}) in order to minimize a quadratic interaction energy $\Ee(\al) \eqdef \int_{\RR^q \times \RR^q} F(x,x') \d\al(x) \d\al(x')$ (we assume for ease of notation that $F$ is symmetric). 
Indeed, introducing a step size $\tau>0$, setting $f(x,x')=x + 2 \tau \nabla_x F(x,x')$, one sees that the measure $\al_{X_\ell}$ defined by the iterates $X_{\ell+1} = T_f(X_\ell)$ of a recurrent nets is approximating at time $t=\ell\tau$ the Wasserstein gradient flow $\al(t)$ of the energy $\Ee$. As detailed for instance in~\cite{santambrogio2015optimal}, such a gradient flow is the solution of the PDE $\pd{\al}{t} = \text{div}( \al \nabla(\Ee'(\al) )  )$ where $\Ee'(\al) = \int_{\RR^q} F(x,\cdot) \d\al(x)$ is the ``Euclidean'' derivative of $\Ee$.
The pioneer work of~\cite{hashimoto2016learning} only considers linear and entropy functionals of the form $\Ee(\al) = \int (F(x)+\log(\frac{\d\al}{\d x})) \d\al(x)$ which leads to evolutions $\al(t)$ being Fokker-Plank PDEs. Our work can thus be interpreted as extending this idea to the more general setting of interaction functionals (see Section~\ref{sec:theory} for the extension beyond pairwise interactions). 


\section{Theoretical Analysis} \label{sec:theory}

In order to get some insight on these deep architectures, we now highlight some theoretical results detailing the regularity and approximation power of these functionals. This theoretical analysis relies on the Wasserstein distance, which allows to make quantitative statements associated to the convergence in law.


\subsection{Convergence in Law Topology}

\paragraph{Wasserstein distance.} 

In order to measure regularity of the involved functionals, and also to define loss functions to fit these architectures (see Section~\ref{sec:applications}), we consider the $p$-Wasserstein distance (for $1 \leq p < +\infty$) between two probability distributions $(\al,\be) \in \Mm_+^1(\RR^q)$
\begin{equation} \label{W_p}
	\Wass_p^p(\al,\be) \eqdef \umin{ \pi_1=\al,\pi_2=\be } \int_{(\RR^q)^2} \norm{x-y}^p \d\pi(x,y)
\end{equation}
where $\pi_1,\pi_2 \in \Mm_+^1(\RR^q)$ are the two marginals of a coupling measure $\pi$, and the minimum is taken among coupling measures $\pi \in  \Mm_+^1(\RR^q \times \RR^q)$. 

A classical result (see~\cite{santambrogio2015optimal}) asserts that $\Wass_{1}$ is a norm, and can be conveniently computed using
\eq{
	\Wass_1(\al,\be) = \Wass_1(\al-\be) = \umax{\Lip(g) \leq 1} \int_\Xx g \d(\al-\be),
}
where $\Lip(g)$ is the Lipschitz constant of a map $g : \Xx \rightarrow \RR$ (with respect to the Euclidean norm unless otherwise stated).

With an abuse of notation, we write $\Wass_p(X,Y)$ to denote $\Wass_p(\al_X,\al_Y)$, but one should be careful that we are considering distances between laws of random vectors. An alternative formulation is $\Wass_p(X,Y) = \min_{X',Y'} \EE(\norm{X'-X'}^p)^{1/p}$ where $(X',Y')$ is a couple of vectors such that $X'$ (resp. $Y'$) has the same law as $X$ (resp. $Y$), but of course $X'$ and $Y'$ are not necessarily independent.
The Wasserstein distance metrizes the convergence in law (denoted $\rightharpoonup$) in the sense that $X_k \rightharpoonup X$ is equivalent to $\Wass_1(X_k,X) \rightarrow 0$. 

In the numerical experiments, we estimate $\Wass_p$ using Sinkhorn's algorithm~\citep{cuturi2013sinkhorn}, which provides a smooth approximation amenable to (possibly stochastic) gradient descent optimization schemes, whether it be for generative or predictive tasks (see Section~\ref{sec:applications}). 

\paragraph{Lipschitz property.} 

A map $T : \Rr(\RR^q) \rightarrow \Rr(\RR^r)$ is continuous for the convergence in law (aka the weak$^*$ of measures) if for any sequence $X_k \rightharpoonup X$, then $T(X_k) \rightharpoonup T(X)$. 
Such a map is furthermore said to be $C$-Lipschitz for the 1-Wasserstein distance if 
\eql{\label{eq-wass-lip}
	\foralls (X,Y) \in \Rr(\RR^q)^2, \, 
	\Wass_1( T(X),T(Y) )
	\leq 
	C \Wass_1(X,Y).
}
Lipschitz properties enable us to analyze robustness to input perturbations, since it ensures that if the input distributions of random vectors are close enough (in the Wasserstein sense), the corresponding output laws are close too.

\subsection{Regularity of Building blocks} \label{building-blocks}

\paragraph{Elementary blocks.} The following proposition, whose proof can be found in Appendix~\ref{sec-proof-propEB}, shows that elementary blocks are robust to input perturbations. 

\begin{prop}[Lipschitzianity of elementary blocks] \label{propEB}
If for all $x$,  $f(x,\cdot)$ and $f(\cdot,x)$ are $C(f)$-Lipschitz, then $T_f$ is $2rC(f)$-Lipschitz in the sense of~\eqref{eq-wass-lip}.
\end{prop}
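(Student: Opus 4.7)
The plan is to realize $T_f$ as a push-forward and transport an optimal $W_1$ coupling through it. Observe first that $T_f(X) = g_X(X)$, where $g_X : \mathbb{R}^q \to \mathbb{R}^r$ is the deterministic map $g_X(x) \eqdef \int_{\mathbb{R}^q} f(x,x')\,\d\alpha_X(x')$. Consequently $\alpha_{T_f(X)} = (g_X)_\sharp \alpha_X$, and similarly for $Y$. Let $\pi$ be an optimal coupling of $\alpha_X$ and $\alpha_Y$ for $W_1$; then $((g_X,g_Y))_\sharp \pi$ is an admissible coupling of $\alpha_{T_f(X)}$ and $\alpha_{T_f(Y)}$, giving the primal bound
$$
W_1(T_f(X),T_f(Y)) \;\leq\; \int \|g_X(x) - g_Y(y)\|\,\d\pi(x,y).
$$

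Next I would bound the integrand pointwise. Using the same coupling $\pi$ as a coupling of $\alpha_X$ and $\alpha_Y$ on the ``inner'' variables,
$$
g_X(x) - g_Y(y) \;=\; \int \bigl[f(x,x') - f(y,y')\bigr]\,\d\pi(x',y'),
$$
and I split with the triangle inequality
$$
\|f(x,x') - f(y,y')\| \;\leq\; \|f(x,x') - f(y,x')\| + \|f(y,x') - f(y,y')\|.
$$
Each summand is controlled by a Lipschitz hypothesis: the first by the assumed $C(f)$-Lipschitz dependence of $f(\cdot,x')$, the second by the $C(f)$-Lipschitz dependence of $f(y,\cdot)$. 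Passing from the component-wise Lipschitz control on the $r$ coordinates of $f$ back to the Euclidean norm on $\mathbb{R}^r$ used by $W_1$ is exactly what produces the factor $r$ (via $\|v\| \leq \|v\|_1 \leq r \max_k |v_k|$).

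Finally, I would plug everything back in. Integrating the pointwise bound against $\pi$ yields
$$
W_1(T_f(X),T_f(Y)) \;\leq\; r C(f) \!\int\!\!\int\!\bigl(\|x-y\| + \|x'-y'\|\bigr)\,\d\pi(x,y)\,\d\pi(x',y'),
$$
and since $\pi$ is optimal each of the two integrals equals $W_1(X,Y)$, giving the claimed constant $2rC(f)$.

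The main subtlety will be bookkeeping the norms consistently: the Lipschitz hypothesis lives on maps $\mathbb{R}^q \to \mathbb{R}^r$, whereas $W_1$ on $\Rr(\mathbb{R}^r)$ uses the Euclidean norm; correctly accounting for this equivalence is what forces the factor $r$ to appear (rather than $\sqrt{r}$ or $1$). Everything else is a combination of the triangle inequality, Fubini, and the primal formulation of $W_1$; no deeper property of optimal transport beyond the duality/coupling interpretation is required.
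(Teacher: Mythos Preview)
Your argument is correct, and in fact cleaner than the paper's. The paper proceeds via the \emph{dual} (Kantorovich--Rubinstein) formulation: it decomposes $\Tt_f(\al)=f[\cdot,\al]_\sharp\al$ and bounds
\[
\Wass_1(\Tt_f(\al),\Tt_f(\be)) \leq \Wass_1(f[\cdot,\al]_\sharp\al,f[\cdot,\be]_\sharp\al) + \Wass_1(f[\cdot,\be]_\sharp\al,f[\cdot,\be]_\sharp\be),
\]
invoking separate lemmas on Lipschitzness of push-forward and of the integration map $\al\mapsto f[\cdot,\al]$. In the integration lemma the duality is applied coordinate-by-coordinate to the $\RR^r$-valued integrand, and summing the $r$ scalar bounds is precisely where the factor $r$ enters. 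You instead stay on the \emph{primal} side: one optimal coupling $\pi$ serves simultaneously for the outer and inner variables, and Jensen's inequality handles the vector-valued integrand in one shot.

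The only point to flag is your explanation of the factor $r$. In your coupling argument it is \emph{not} needed: once you write $g_X(x)-g_Y(y)=\int[f(x,x')-f(y,y')]\d\pi(x',y')$ and apply $\|\int F\,\d\pi\|_2\le\int\|F\|_2\,\d\pi$, the Euclidean Lipschitz hypothesis on $f(\cdot,x')$ and $f(y,\cdot)$ gives directly $\|f(x,x')-f(y,y')\|_2\le C(f)(\|x-y\|+\|x'-y'\|)$, and integrating yields the sharper constant $2C(f)$. The norm-equivalence detour $\|v\|_2\le\|v\|_1\le r\max_k|v_k|$ you describe is the mechanism in the paper's dual approach, not in yours. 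So your route not only proves the proposition but improves the stated constant from $2rC(f)$ to $2C(f)$.
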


As a composition of Lipschitz functions defines Lipschitz maps, the architectures of the form~\eqref{eq-deep-archi} are thus Lipschitz, with a Lipschitz constant upper-bounded by $2r \sum_t C(f_t)$, where we used the notations of Proposition \ref{propEB}.

\paragraph{Tensorization.}

As highlighted in Remark~\ref{rem-tensorization}, tensorization plays an important role to define higher-order interaction blocks. 

\begin{defn}[Tensor product]
Given two $(X,Y) \in \Rr(\Xx) \times \Rr(\Yy)$, a tensor product random vector is $X \otimes Y \eqdef (X',Y') \in \Rr( \Xx \times \Yy )$ where $X'$ and $Y'$ are independent and have the same law as $X$ and $Y$. This means that $\d\al_{X \otimes Y}(x,y) = \d\al_X(x)\d\al_Y(y)$ is the tensor product of the measures. 
\end{defn}


\begin{rem}[Tensor Product between Discrete Measures]
	If we consider random vectors supported on point clouds, with laws $\al_X = \frac{1}{n} \sum_{i=1}^n \de_{x_i}$ and $\al_Y = \frac{1}{m} \sum_{j=1}^m \de_{y_j}$, then $X \otimes Y$ is a discrete random vector supported on $nm$ points, since $\al_{X \otimes Y} = \frac{1}{nm}\sum_{i,j} \de_{(x_i,y_j)}$.
\end{rem}

The following proposition shows that tensorization blocks maintain the stability property of a deep architecture.

\begin{prop}[Lipschitzness of tensorization]\label{prop-tensorization}
	One has, for $(X,X',Y,Y') \in \Rr(\Xx)^2 \times \Rr(\Yy)^2$, 
	\eq{
		\Wass_1( X \otimes Y,  X' \otimes Y' ) \leq \Wass_1(X,X') + \Wass_1(Y,Y').
	}
\end{prop}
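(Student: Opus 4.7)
The plan is to couple the product measures by the product of optimal couplings and then use subadditivity of the Euclidean norm on the product space. Concretely, I would first pick couplings $\pi \in \Mm_+^1(\Xx \times \Xx)$ and $\rho \in \Mm_+^1(\Yy \times \Yy)$ that achieve (or approximate, by an $\eps$-argument) $\Wass_1(X,X')$ and $\Wass_1(Y,Y')$. Equivalently, choose random vectors $(A,A')$ with law $\pi$ and $(B,B')$ with law $\rho$, realised on the same probability space and independent of each other, so that $\EE\|A-A'\| = \Wass_1(X,X')$ and $\EE\|B-B'\| = \Wass_1(Y,Y')$.

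Next I would check that $\gamma \eqdef \pi \otimes \rho$ is an admissible coupling between $\al_{X \otimes Y}$ and $\al_{X' \otimes Y'}$. Its first marginal on $\Xx \times \Yy$ is the law of $(A,B)$, which by independence equals $\al_X \otimes \al_Y = \al_{X \otimes Y}$, and similarly the second marginal equals $\al_{X' \otimes Y'}$. Hence it is a valid competitor in the definition~\eqref{W_p} of $\Wass_1$ for the pair $(X \otimes Y, X' \otimes Y')$.

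Then I would apply the definition of $\Wass_1$ together with the elementary inequality $\sqrt{a^2+b^2} \leq a + b$ for $a,b \geq 0$, which gives
\[
\|(A,B)-(A',B')\| = \sqrt{\|A-A'\|^2 + \|B-B'\|^2} \leq \|A-A'\| + \|B-B'\|
\]
on $\Xx \times \Yy$ equipped with the product Euclidean norm. Taking expectations and using linearity yields
\[
\Wass_1(X \otimes Y, X' \otimes Y') \leq \EE\|A-A'\| + \EE\|B-B'\| = \Wass_1(X,X') + \Wass_1(Y,Y'),
\]
which is the claim.

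There is essentially no hard step here; the only thing to be careful about is the choice of norm on the product space. The bound uses that the Euclidean norm is subadditive across an orthogonal decomposition, so if one wanted other $\ell^p$ metrics on the product space the constant could change. Existence of optimal couplings is standard on Polish spaces with finite first moments, and otherwise one replaces optimal couplings with $\eps$-optimal ones and lets $\eps \to 0$.
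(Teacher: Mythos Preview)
Your argument is correct, but it proceeds through the primal (coupling) definition of $\Wass_1$, whereas the paper works on the dual side via the Kantorovich--Rubinstein formula $\Wass_1(\mu,\nu)=\sup_{\Lip(g)\le 1}\int g\,\d(\mu-\nu)$. There one telescopes $\al\otimes\be-\al'\otimes\be'$ through an intermediate product and bounds each resulting integral by noting that every section $g(\cdot,y)$ and $g(x,\cdot)$ of a $1$-Lipschitz $g$ on the product space is itself $1$-Lipschitz. Your construction via the product coupling $\pi\otimes\rho$ is equally short and has the advantage of adapting (with at most a dimensional constant for $p>2$) to $\Wass_p$, where no Lipschitz dual is available; it also makes the dependence on the chosen product metric explicit, as you observe. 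The dual argument, in turn, avoids any appeal to the existence of optimal couplings and is literally a two-line computation once the telescoping is written.
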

\begin{proof}
	One has
	\eq{
	\begin{split}
		 &\Wass_{1}( \al \otimes \be,  \al' \otimes \be' )  \\
		&= \umax{\Lip(g)\leq 1}
			\int_{1} g(x,y) [ \d\al(x)\d\be(y)-\d\al'(x)\d\be'(y) ] \\
		& = 
		\umax{\Lip(g)\leq 1}
		\int_\Xx \int_\Yy g(x,y)[\d\be(y)-\d\be'(y)]\d\al(x) \\
		& +   
		\int_\Yy \int_\Xx g(x,y)[\d\al(x)-\d\al'(x)]\d\be(y),
	\end{split}
	}
	hence the result.
\end{proof}

\subsection{Approximation Theorems}

\paragraph{Universality of elementary block.}

The following theorem shows that any continuous map between random vectors can be approximated to arbitrary precision using three  elementary blocks. Note that it includes through $\La$ a fixed random input which operates as an ``excitation block'' similar to the generative VAE models studied in Section~\ref{sec:generation}.

\begin{thm}\label{thm-meas-valued}
	Let $\Ff : \Rr(\Xx) \rightarrow \Rr(\Yy)$ be a continuous map for the convergence in law, where $\Xx \subset \RR^q$ and $\Yy \subset \RR^r$ are compact. Then $\forall \epsilon > 0$ there exists three continuous maps $f, g, h$ such that 
	\eql{\label{eq-approx-thm}
		\forall X \in \Rr(\Xx), \quad \Wass_1 ( \Ff(X), T_h \circ \Lambda \circ T_{g} \circ T_{f}(X) ) \leq \epsilon.
	}
	where $\Lambda : X \mapsto (X,U)$ concatenates a uniformly distributed random vector $U$. 
\end{thm}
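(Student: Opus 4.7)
The plan is to use the three elementary blocks for three distinct roles: $T_f$ will encode the input law $\al_X$ into a deterministic finite-dimensional code, $T_g$ will map that code to parameters of the target output law, and $T_h$ will combine these parameters with the uniform noise appended by $\Lambda$ to produce the output via push-forward. Concretely, I take $f(x,x')\eqdef\phi(x')$, independent of its first argument, so that by Remark~\ref{random-det} one has $T_f(X)=\int\phi\,\d\al_X\eqdef\Phi(\al_X)\in\RR^N$, a deterministic vector; I take $g(v,v')\eqdef G(v)$, independent of its second argument, so that by Remark~\ref{rmk-push-fwd} the block $T_g$ is a push-forward by a continuous $G:\RR^N\to\RR^M$, returning the deterministic vector $G(\Phi(\al_X))$; and I take $h((w,u),(w',u'))\eqdef\tilde h(w,u)$, independent of its second argument, so that $T_h$ pushes $(G(\Phi(\al_X)),U)$ forward to $\tilde h(G(\Phi(\al_X)),U)$.

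For the encoding step, since $\Xx$ is compact the space $(\Mm_+^1(\Xx),\Wass_1)$ is compact and $\Ff$ is uniformly continuous into $(\Mm_+^1(\Yy),\Wass_1)$. I cover $\Mm_+^1(\Xx)$ by finitely many $\Wass_1$-balls $B_{\eta/2}(\al_j)$, $j=1,\ldots,L$, where $\eta$ is a modulus of uniform continuity of $\Ff$ at level $\epsilon/3$. By Kantorovich duality and density of $1$-Lipschitz functions in $\Cc(\Xx)$, I pick $N$ large enough and a $1$-Lipschitz $\phi:\Xx\to\RR^N$ so that a continuous partition of unity $\{\rho_j\}_{j=1}^L$ on $\RR^N$, each $\rho_j$ localised near $\Phi(\al_j)$, enjoys the implication $\rho_j(\Phi(\al))>0\Rightarrow\al\in B_\eta(\al_j)$. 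I then define the continuous decoding
\begin{equation*}
\Psi(v)\eqdef\sum_{j=1}^L\rho_j(v)\,\Ff(\al_j)\in\Mm_+^1(\Yy),
\end{equation*}
a convex combination of fixed output laws with continuously varying weights. Convexity of $\Wass_1$ combined with uniform continuity of $\Ff$ gives $\Wass_1(\Psi(\Phi(\al)),\Ff(\al))\leq\epsilon/3$ uniformly in $\al\in\Mm_+^1(\Xx)$.

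The remaining task — and the main obstacle — is to realise $\Psi(v)$ as the law of $\tilde h(G(v),U)$ for jointly continuous $G,\tilde h$. For each $j$, since $\Ff(\al_j)$ is a compactly supported Borel probability measure, it can be approximated to within $\epsilon/3$ in $\Wass_1$ by the push-forward of the uniform law on $[0,1]^d$ by some continuous map $\mu_j:[0,1]^d\to\Yy$: first approximate $\Ff(\al_j)$ by a finitely-supported grid measure, then smoothly interpolate between its atoms. One extra noise coordinate $u_0\in[0,1]$ is used to select the mixture index via a continuous simplex-slicing driven by the weight vector $w\eqdef G(v)\eqdef(\rho_1(v),\ldots,\rho_L(v))$, and $\tilde h(w,(u_0,u_1,\ldots,u_d))$ is then set to $\mu_{j(w,u_0)}(u_1,\ldots,u_d)$. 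The subtle point is joint continuity where $u_0$ lies on an interface between two components or where some $\rho_j$ vanishes; this is handled by a thin-boundary-layer interpolation between the candidate $\mu_j$'s so that $\tilde h$ is continuous on the whole parameter set. Once such $G,\tilde h$ are in hand (with $G$ extended from the compact image $\Phi(\Mm_+^1(\Xx))$ to all of $\RR^N$ by Tietze's theorem), the push-forward identity $(\tilde h(G(\Phi(\al_X)),\cdot))_\sharp\mathrm{Unif}=\Psi(\Phi(\al_X))$ up to $\epsilon/3$ and the triangle inequality close the bound~\eqref{eq-approx-thm}.
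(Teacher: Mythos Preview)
Your three-block decomposition --- encode the input law into a deterministic vector via $T_f$, transform that vector via $T_g$, then decode by pushing the appended uniform noise forward via $T_h$ --- is exactly the scheme the paper uses; only the concrete instantiation of each step differs. The paper encodes with P1 hat functions on a uniform grid (so $\Phi(\al)=(\int\phi_i\,\d\al)_i\in\Sigma_n$), takes $G=D_\Yy\circ\Ff\circ D_\Xx^*$ (discretise the output of $\Ff$ on a grid of $\Yy$), and decodes by letting $H(b,\cdot)$ be the Brenier map from the uniform law $\Uu$ to $(1-\epsilon)D_\Yy^*(b)+\epsilon\,\Uu$. Your encoding through finitely many $1$-Lipschitz Kantorovich witnesses is a legitimate alternative (made rigorous by Arzel\`a--Ascoli on the unit Lipschitz ball, not by ``density of $1$-Lipschitz functions in $\Cc(\Xx)$'', which is false as written), and your partition-of-unity interpolation $\Psi(v)=\sum_j\rho_j(v)\Ff(\al_j)$ together with convexity of $\Wass_1$ is a clean replacement for the paper's grid discretisation of the output.

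The genuine gap is in the decoding step. Your slicing map $(w,u_0)\mapsto j(w,u_0)$, selecting the index with $u_0\in[W_{j-1}(w),W_j(w))$ where $W_j=\sum_{i\le j}w_i$, is \emph{not} jointly continuous where some $w_j=0$: as $w_j\downarrow 0$ the $j$-th slice collapses and the selected index jumps. A thin-boundary-layer interpolation in $u_0$ between neighbouring $\mu_j$'s smooths the dependence on $u_0$ for \emph{fixed} $w$, but does not by itself restore continuity in $w$ at the boundary of the simplex $\Sigma_L$. This is precisely the obstacle the paper's $\epsilon\,\Uu$ perturbation is designed to remove: by forcing the target to have a density bounded below on a convex support, the optimal-transport map from $\Uu$ becomes continuous in $u$ and stable in the parameter $b$. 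In your setup the analogous fix is to replace $w$ by $\tilde w_j=(w_j+\epsilon/L)/(1+\epsilon)$ before slicing, so that all weights stay bounded away from zero; the slicing then becomes jointly continuous on the relevant compact, at the price of an additional $O(\epsilon\,\mathrm{diam}(\Yy))$ error in $\Wass_1$. Without some such regularisation your $\tilde h$ is not jointly continuous and the argument is incomplete.
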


The architecture that we use to prove this theorem is displayed on Figure~\ref{fig:example-archi}, bottom (left). 
Since $f$, $g$ and $h$ are smooth maps, according to the universality theorem of neural networks~\citep{cybenko1989approximation,leshno1993multilayer} (assuming some restriction on the non-linearity $\la$, namely its being a nonconstant, bounded and continuous function), it is possible to replace each of them (at the expense of increasing $\epsilon$) by a sequence of fully connected layers (as detailed in Remark~\ref{rem-fully-connected}). This is detailed in Section~\ref{sec-proof-approx-thm} of the appendix. 

Since deterministic vectors are a special case of random vectors (see Remark~\ref{random-det}), this results encompasses as a special case universality for vector-valued maps  $\Ff : \Rr(\Omega) \rightarrow \RR^r$ (used for instance in classification in Section~\ref{sec:classif}) and in this case only 2 elementary blocks are needed. Of course the classical universality of multi-layer perceptron~\citep{cybenko1989approximation,leshno1993multilayer} for vectors-to-vectors maps $\Ff : \RR^q \rightarrow \RR^r$ is also a special case (using a single elementary block). 

\begin{proof}(of Theorem~\ref{thm-meas-valued})
In the following, we denote the probability simplex as $\Sigma_n = \enscond{ a \in \RR_+^n }{\sum_i a_i=1}$.
Without loss of generality, we assume $\Xx \subset [0,1]^q$ and $\Yy \subset [0,1]^r$. We consider two uniform grids of $n$ and $m$ points $(x_i)_{i=1}^{n}$ of $[0,1]^q$ and $(y_j)_{j=1}^{m}$ of $[0,1]^r$.  
On these grids, we consider the usual piecewise affine P1 finite element bases $(\phi_i)_{i=1}^{n}$ and $(\psi_j)_{j=1}^{m}$, which are continuous hat functions supported on cells $(R_i)_i$ and $(S_j)_j$ which are cubes of width $2/n^{1/q}$ and $2/m^{1/r}$. 
We define discretization operators as $D_\Xx : \al \in \Mm_1^+(\Xx) \mapsto ( \int_{R_i} \phi_i \d \al )_{i=1}^{n} \in \Si_{n}$
and $D_\Yy : \be \in \Mm_1^+(\Yy) \mapsto ( \int_{S_j} \psi_j \d \be )_{j=1}^{m} \in \Si_{n}$. We also define $D_\Xx^* : a \in \Si_n \mapsto \sum_i a_i \de_{x_i} \in \Mm_1^+(\Xx)$ and $D_\Yy^* : b \in \Si_m \mapsto \sum_j b_j \de_{y_i} \in \Mm_1^+(\Yy)$.

The map $\Ff$ induces a discrete map $G : \Si_{n} \rightarrow \Si_{m}$ defined by $G \eqdef D_\Yy \circ \Ff \circ D_\Xx^*$. Remark that $D_\Xx^*$ is continuous from $\Si_n$ (with the usual topology on $\RR^n$) to $\Mm_+^1(\Xx)$ (with the convergence in law topology), $\Ff$ is continuous (for the convergence in law), $D_\Yy$ is continuous from $\Mm_+^1(\Yy)$ (with the convergence in law topology) to $\Si_m$ (with the usual topology on $\RR^m$). This shows that $G$ is continuous. 

For any $b \in \Sigma_{m}$, Lemma~\ref{lemma-noise-reshaping} proved in the appendices defines a continuous map $H$ so that, defining $U$ to be a random vector uniformly distributed on $[0,1]^r$ (with law $\Uu$), $H(b,U)$ has law $ (1-\epsilon) D_\Yy^*(b) + \epsilon \Uu$. 

We now have all the ingredients, and define the three continuous maps for the elementary blocks as
\eq{
	f(x,x') = (\phi_i(x'))_{i=1}^n \in \RR^n, \quad
	g(a,a') = G(a') \in \RR^m, 
}
\eq{
	\qandq 
	h((b,u),(b',u')) = H(b,u) \in \Yy.
}
The corresponding architecture is displayed on Figure~\ref{fig:example-archi}, bottom. 
Using these maps, one needs to control the error between $\Ff$ and $\hat \Ff \eqdef T_h \circ \Lambda \circ T_{g} \circ T_{f} = H_\sharp \circ \Lambda \circ D_\Yy \circ \Ff \circ D_\Xx^* \circ \Dd_{\Xx}$ where we denoted $H_\sharp(b) \eqdef H(b,\cdot)_\sharp \Uu$ the law of $H(b,U)$ (i.e. the pushforward of the uniform distribution $\Uu$ of $U$ by $H(b,\cdot)$). 

(i) We define $\hat\al \eqdef D_\Xx^*\Dd_\Xx(\al)$. The diameters of the cells $R_i$ is $\De_j = \sqrt{q}/n^{1/q}$, so that Lemma~\ref{lem-proof-approx-thm} in the appendices shows that $\Wass_1( \al,\hat\al) \leq \sqrt{q}/n^{1/q}$. Since $\Ff$ is continuous for the convergence in law, choosing $n$ large enough ensures that $\Wass_1(\Ff(\al),\Ff(\hat\al)) \leq \epsilon$.

(ii) We define $\hat\be \eqdef D_\Yy^* D_\Yy\Ff(\hat\al)$. Similarly, using $m$ large enough ensures that $\Wass_1(\Ff(\hat \al),\hat\be) \leq \epsilon$.

(iii) Lastly, let us define $\tilde\be \eqdef H_\sharp \circ D_\Yy(\hat\be)=\hat\Ff(\al)$. By construction of the map $H$ in Lemma~\ref{lemma-noise-reshaping}, one has hat $\tilde\be = (1-\epsilon) \hat\be + \epsilon \Uu$ so that $\Wass_1(\tilde\be,\hat\be) = \epsilon \Wass_1(\hat \be, \Uu) \leq C \epsilon$ for the constant $C=2 \sqrt{r}$ since the measures are supported in a set of diameter $\sqrt{r}$.

Putting these three bounds (i), (ii) and (iii) together using the triangular inequality shows that $\Wass_1(\Ff(\al),\hat\Ff(\al)) \leq (2+C)\epsilon$. 
\end{proof}

\paragraph{Universality of tensorization.}

The following Theorem, whose proof can be found in Appendix~\ref{sec-proof-approx-tenso}, shows that one can approximate any continuous map using a high enough number of tensorization layers followed by an elementary block.

\begin{thm}\label{thm-density-tensorization}
	Let $\Ff : \Rr(\Omega) \rightarrow \RR$ a continuous map for the convergence in law, where $\Omega \subset \RR^q$ is compact. Then $\forall \epsilon > 0$, there exists $n>0$ and a continuous function $f$ such that
	\eql{\label{eq-approx-tensor}
		\foralls X \in \Rr(\Om), \quad
		|\Ff(X) - T_f \circ \th_n( X ) | \leq \epsilon
	}
	where $\th_n(X) = X \otimes \ldots \otimes X$ is the $n$-fold self tensorization. 
\end{thm}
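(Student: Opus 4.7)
My plan is to reduce the statement to a Stone--Weierstrass argument on the space of probability measures, and then express the approximating polynomial functional in the form $T_f \circ \th_n$ by invoking Remark~\ref{random-det} (maps depending only on the second argument).

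First I would recall that, since $\Om \subset \RR^q$ is compact, the space $\Mm_+^1(\Om)$ endowed with the weak-$*$ topology is itself a compact metric space (e.g.\ metrized by $\Wass_1$). Consider the subalgebra $\Aa \subset \Cc(\Mm_+^1(\Om))$ generated by the linear evaluation functionals $\ell_\phi : \al \mapsto \int_\Om \phi \, \d\al$ for $\phi \in \Cc(\Om)$, together with the constants. This algebra separates points of $\Mm_+^1(\Om)$ (this is precisely the defining property of the weak-$*$ topology) and contains the constants. By the Stone--Weierstrass theorem, $\Aa$ is uniformly dense in $\Cc(\Mm_+^1(\Om))$. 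Applying this to the continuous functional $\Ff$ yields, for any $\epsilon > 0$, a finite sum of monomials
\[
	\hat\Ff(\al) \;=\; \sum_{k=1}^K c_k \prod_{i=1}^{n_k} \ell_{\phi_{k,i}}(\al),
	\qquad \sup_{\al \in \Mm_+^1(\Om)} | \Ff(\al) - \hat\Ff(\al) | \leq \epsilon.
\]

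Next I would homogenize to a single degree $n \eqdef \max_k n_k$: since $\al$ is a probability measure we have $\ell_1(\al) = 1$, so each monomial can be multiplied by the appropriate power of $\ell_1$ so that every product has exactly $n$ factors, without changing its value. Applying Fubini, each degree-$n$ monomial becomes a multiple integral
\[
	\prod_{i=1}^{n} \ell_{\phi_{k,i}}(\al) \;=\; \int_{\Om^n} \phi_{k,1}(x_1) \cdots \phi_{k,n}(x_n) \, \d\al(x_1) \cdots \d\al(x_n),
\]
so summing these monomials produces a continuous function $g : \Om^n \to \RR$ with $\hat\Ff(\al) = \int_{\Om^n} g \, \d\al^{\otimes n}$. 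Extending $g$ to a continuous function on $\RR^{qn}$ via Tietze (Om is compact so this is harmless), this says $\hat\Ff(\al_X) = \EE[ g(X_1,\ldots,X_n) ]$ for i.i.d.\ copies $X_1,\ldots,X_n$ of $X$, i.e.\ $\hat\Ff(\al_X) = \EE[ g(\th_n(X)) ]$.

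Finally, to cast $\hat\Ff$ as $T_f \circ \th_n$, I would define $f : \RR^{qn} \times \RR^{qn} \to \RR$ by $f(y,y') \eqdef g(y')$, a function of its second argument only. By Remark~\ref{random-det}, for any random vector $Y$ one has $T_f(Y) = \EE[g(Y')] = \EE[g(Y)]$ (as a constant). Taking $Y = \th_n(X)$ gives $T_f(\th_n(X)) = \EE[g(X_1,\ldots,X_n)] = \hat\Ff(\al_X)$, and the triangle bound $|\Ff(X) - T_f \circ \th_n(X)| = |\Ff(\al_X) - \hat\Ff(\al_X)| \leq \epsilon$ holds uniformly in $X \in \Rr(\Om)$, which is \eqref{eq-approx-tensor}.

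The only genuinely delicate point is the verification that $\Aa$ separates points and that Stone--Weierstrass therefore applies; the rest is bookkeeping. The ``homogenization'' step (padding with powers of $\ell_1$) is what critically exploits the assumption that $\al$ is a probability measure, and is what lets a sum of mixed-degree monomials collapse into a single expectation against $\al^{\otimes n}$ for one fixed $n$.
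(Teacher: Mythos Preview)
Your argument is correct and follows essentially the same Stone--Weierstrass route as the paper: both identify the compactness of $\Mm_+^1(\Om)$, verify that the ``polynomial'' functionals $\al \mapsto \int_{\Om^n} \phi\,\d\al^{\otimes n}$ form a point-separating subalgebra with constants, and conclude density. Your version is in fact slightly tidier---you generate the algebra from continuous linear evaluations (so separation is immediate from the definition of weak-$*$ and continuity of $f$ comes for free), and you spell out the homogenization and the translation to $T_f\circ\th_n$ via Remark~\ref{random-det}, steps the paper leaves implicit.
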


The architecture used for this theorem is displayed on the bottom (right) of Figure~\ref{fig:example-archi}. The function $f$ appearing in~\eqref{eq-approx-tensor} plays a similar role as in~\eqref{eq-approx-thm}, but note that the two-layers factorizations provided by these two theorems are very different. It is an interesting avenue for future work to compare them theoretically and numerically.


\section{Applications} \label{sec:applications}

To exemplify the use of our stochastic deep architectures, we consider classification, generation and dynamic prediction tasks. The goal is to highlight the versatility of these architectures and their ability to handle as input and/or output both probability distributions and vectors. We also illustrate the gain in maintaining stochasticity among several layers. 
%

\begin{figure*}[h!]
	\centering
	\includegraphics[width=\linewidth]{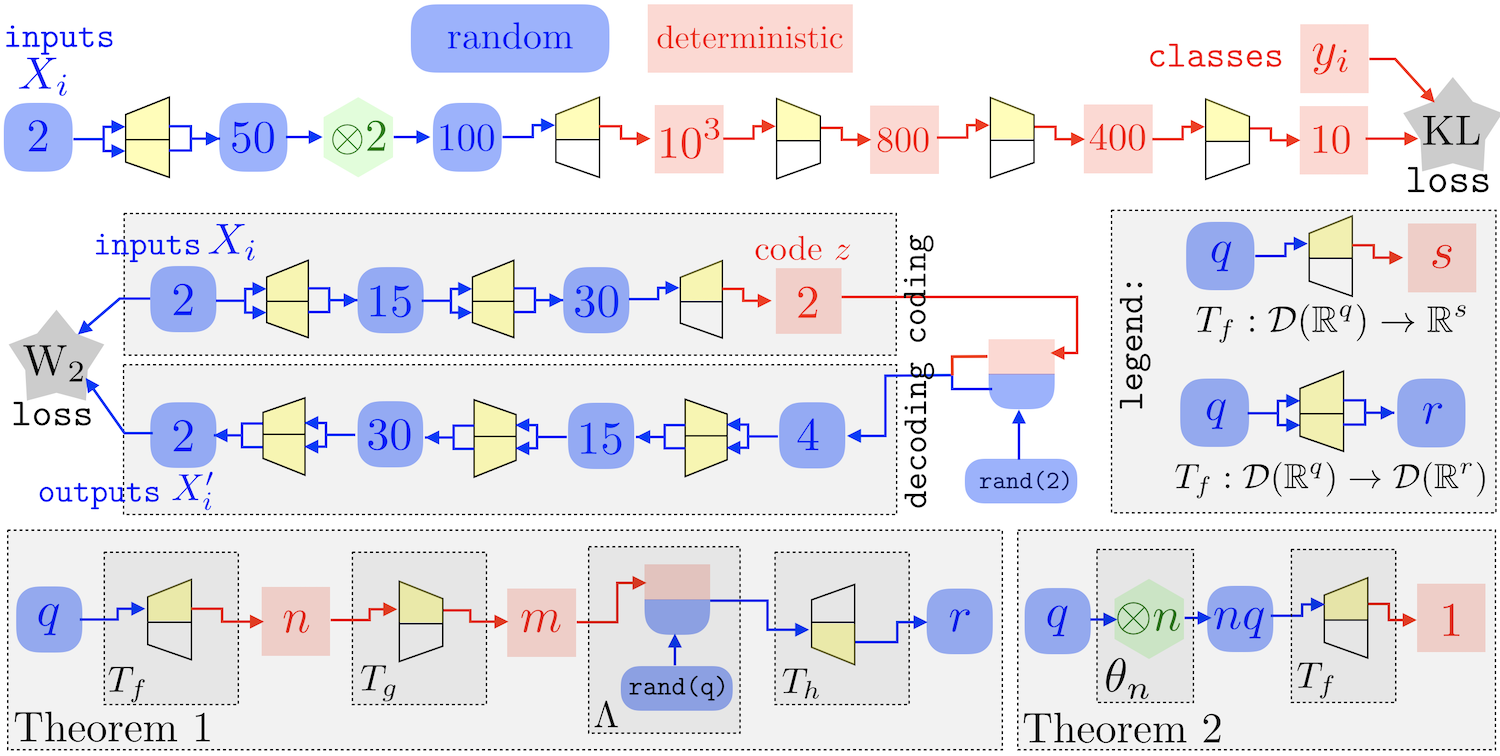}
	\caption{Top and center: two examples of deep stochastic architectures applied to the MNIST dataset: top for classification purpose (Section~\ref{sec:classif}), center for generative model purpose (Section~\ref{sec:generation}). 
		Bottom: architecture for the proof of Theorems~\ref{thm-meas-valued} and~\ref{thm-density-tensorization}. 
	\label{fig:example-archi}}
\end{figure*}

\subsection{Classification tasks}
\label{sec:classif}

\paragraph{MNIST Dataset.}

We perform classification on the 2-D MNIST dataset of handwritten digits. To convert a MNIST image into a 2D point cloud, we threshold pixel values (threshold $\rho = 0.5$) and use as a support of the input empirical measure the $n=256$ pixels of highest intensity, represented as points $(x_i)_{i=1}^n \subset \RR^2$ (if there are less than $n=256$ pixels of intensity over $\rho$, we repeat input coordinates), which are  remapped  along each axis by mean and variance normalization. Each image is therefore turned into a sum of $n=256$ Diracs $\frac{1}{n}\sum_i \delta_{x_i}$. 
Our stochastic network architecture is displayed on the top of Figure~\ref{fig:example-archi} and is composed of 5 elementary blocks $(T_{f_k})_{k=1}^5$ with an interleaved self-tensorisation layer $X \mapsto X \otimes X$. The first elementary block $T_{f_1}$ maps measures to measures, the second one $T_{f_2}$ maps a measure to a deterministic vector (i.e. does not depend on its first coordinate, see Remark~\ref{random-det}), and the last layers are classical vectorial fully-connected ones. We use a ReLu non-linearity $\la$ (see Remark~\ref{rem-fully-connected}). The weights are learnt with a weighted cross-entropy loss function over a training set of 55,000 examples and tested on a set of 10,000 examples. 
Initialization is performed through the Xavier method~\citep{glorot2010understanding} and learning with the Adam optimizer~\citep{kingma2014adam}. Table~\ref{tab:mnistClassif} displays our results, compared with the PointNet~\cite{qi2016pointnet} baseline. We observe that maintaining stochasticity among several layers is beneficial (as opposed to replacing one Elementary Block with a fully connected layer allocating the same amount of memory).

\begin{table}[H]
\centering
 \begin{tabular}{c c c}
         & input type & error ($\%$)   \\  \hline 
		 PointNet & point set & 0.78 \\ \hline
		 Ours & measure (1 stochastic layer) & 1.07 \\ 
		 Ours & measure (2 stochastic layers) & \textbf{0.76} \\ \hline
    \end{tabular}
\vspace*{-10pt}
\caption{MNIST classification results} \label{tab:mnistClassif}
\vspace*{-20pt}
  \end{table}
	
\paragraph{ModelNet40 Dataset.}

We evaluate our model on the ModelNet40~\citep{wu20153d} shape classification benchmark. The dataset contains 3-D CAD models from 40 man-made categories, split into 9,843 examples for training and 2,468 for testing. We consider $n=1,024$ samples on each surface, obtained by a farthest point sampling procedure.  
%
Our classification network is similar to the one displayed on top of Figure~\ref{fig:example-archi}, excepted that the layer dimensions are 
$[{\color{blue} 3},
{\color{blue}10},
{\color{red}500},
{\color{red}800},
{\color{red}400},
{\color{red}40}]$.
Our results are displayed in figure \ref{tab:modelnetClassif}. As previously observed in 2D, peformance is improved by maintaining stochasticity among several layers, for the same amount of allocated memory.

\begin{table}[H]
\centering
 \begin{tabular}{c c c}
         & input type & accuracy ($\%$)   \\  \hline 
     3DShapeNets & volume  & 77 \\ \hline
		 Pointnet & point set & 89.2 \\ \hline
		 Ours & measure & 82.0 \\
		  & (1 stochastic layer) & \\
		 Ours & measure & \textbf{83.5} \\
		  & (2 stochastic layers) & \\ \hline
    \end{tabular}
\vspace*{-10pt}
\caption{ModelNet40 classification results} \label{tab:modelnetClassif}
\vspace*{-20pt}
  \end{table}

\subsection{Generative networks}
\label{sec:generation}

We further evaluate our framework for generative tasks, on a VAE-type model~\citep{kingma2013auto} (note that it would be possible to use our architectures for GANs~\cite{goodfellow2014generative} as well). The task consists in generating outputs resembling the data distribution by decoding a random variable $z$ sampled in a latent space $\Zz$. The model, an encoder-decoder architecture, is learnt by comparing input and output measures using the $\Wass_2$ Wasserstein distance loss, approximated using Sinkhorn's algorithm~\citep{cuturi2013sinkhorn,genevay2018learning}. 
Following~\citep{kingma2013auto}, a Gaussian prior is imposed on the latent variable $z$. The encoder and the decoder are two mirrored architectures composed of two elementary blocks and three fully-connected layers each. The corresponding stochastic network architecture is displayed on the bottom of~\ref{fig:example-archi}.
Figure~\ref{fig:VAEmist} displays an application on the MNIST database where the latent variable $z \in \RR^2$ parameterizes a 2-D of manifold of generated digits. We use as input and output discrete probability measures of $n=100$ Diracs, displayed as point clouds on the right of Figure~\ref{fig:VAEmist}.

\begin{figure}
	\centering
	\begin{tabular}{@{}c@{\hspace{1mm}}c@{}}
	 \includegraphics[width=.47\linewidth]{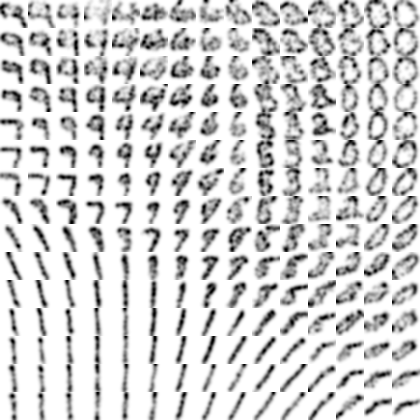}  & 
	 \includegraphics[width=.49\linewidth]{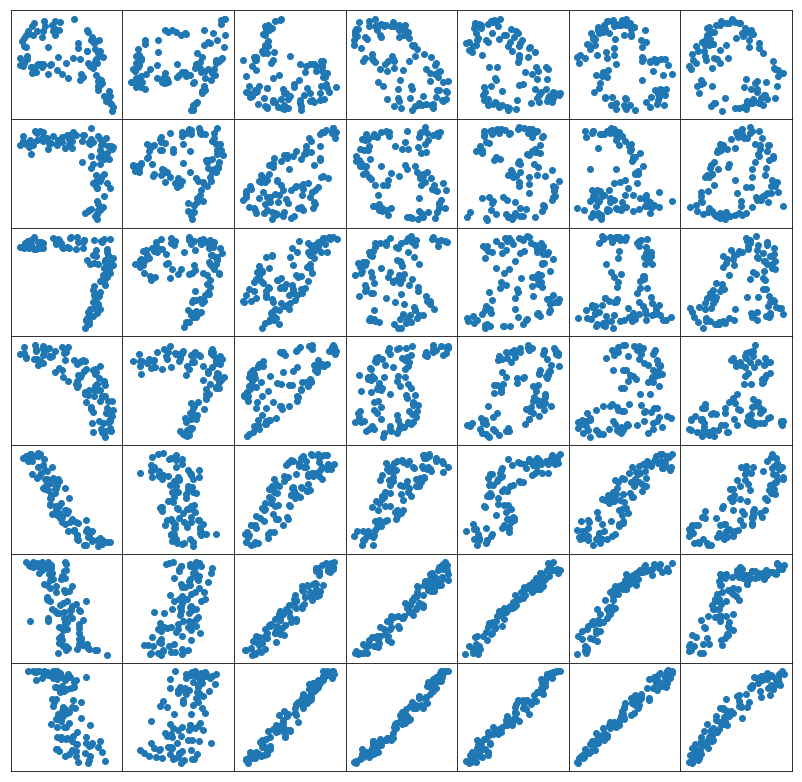}
	\end{tabular}
	\caption{
		Left: Manifold of digits generated by the VAE network displayed on the bottom of~\ref{fig:example-archi}.
		Right: Corresponding point cloud (displaying only a subset of the left images).  
\label{fig:VAEmist}}
\end{figure}

\if 0 
\begin{figure}
\centering
\includegraphics[width=.7\linewidth]{img/VAEmnist/VAE_manifold2-inv} 
\caption{Manifold of digits generated by the VAE network displayed on the bottom of~\ref{fig:example-archi}.
\label{fig:VAEmist}}
\end{figure}
\begin{figure}
	\centering
	\begin{tabular}{@{}c@{\hspace{1mm}}c@{}}
	 \includegraphics[width=.47\linewidth]{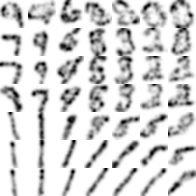} & \includegraphics[width=.49\linewidth]{img/VAEmnist/VAE_points} \\
	 $(a)$ &  $(b)$   
	\end{tabular}
	\caption{
		$(a)$ Sub-manifold of digits generated by the VAE network displayed on the bottom of Figure~\ref{fig:example-archi} and $(b)$ corresponding point clouds
\label{fig:VAEmist-points}}
\end{figure}
\fi 

\subsection{Dynamic Prediction}



The Cucker-Smale flocking model~\citep{cucker2007mathematics} is non-linear dynamical system modelling the emergence of coherent behaviors, as for instance in the evolution of a flock of birds, by solving for positions and speed $x_i(t) \eqdef (p_i(t) \in \RR^d, v_i(t) \in \RR^d)$ for $i=1,\ldots,n$
\eql{\label{eq-cs-model}
	\dot p(t)=v(t), 
	\qandq
	\dot v(t) = \Ll(p(t)) v(t)
} 
where $\Ll(p) \in \RR^{n \times n}$ is the Laplacian matrix associated to a group of points $p \in (\RR^d)^n$
\eq{
	\Ll(p)_{i,j} \eqdef \frac{1}{1+\norm{p_i-p_j}^m}, \,
	\Ll(p)_{i,i} = -\sum_{j \neq i} \Ll(p)_{i,j}.
}
In the numerics, we set $m=0.6$. 
This setting can be adapted to \emph{weighted} particles $(x_i(t),\mu_i)_{i=1\cdots n}$, where each weight $\mu_i$ stands for a set of physical attributes impacting dynamics -- for instance, mass -- which is what we consider here.
This model equivalently describes the evolution of the measure $\al(t)=\sum_{i=1}^n \mu_i \de_{x_i(t)}$ in phase space $(\RR^d)^2$, and following Remark~\ref{sec-gradflows} on the ability of our architectures to model dynamical system involving interactions,~\eqref{eq-cs-model} can be discretized in time which leads to a recurrent network making use of a single elementary block $T_f$ between each time step. Indeed, our block allows to maintain stochasticity among all layers -- which is the natural way of proceeding to follow densities of particles over time.

It is however not the purpose of this article to study such a recurrent network and we aim at showcasing here whether deep (non-recurrent) architectures of the form~\eqref{eq-deep-archi} can accurately capture the Cucker-Smale model. More precisely, since in the evolution~\eqref{eq-cs-model} the mean of $v(t)$ stays constant, we can assume $\sum_i v_i(t)=0$, in which case it can be shown~\citep{cucker2007mathematics} that particles ultimately reach stable positions $(p(t),v(t)) \mapsto (p(\infty),0)$. We denote $\Ff(\al(0)) \eqdef \sum_{i=1}^n \mu_i \de_{p_i(\infty)}$ the map from some initial configuration in the phase space (which is described by a probability distribution $\al(0)$) to the limit probability distribution (described by a discrete measure supported on the positions $p_i(\infty)$). The goal is to approximate this map using our deep stochastic architectures. To showcase the flexibility of our approach, we consider a \emph{non-uniform} initial measure $\al(0)$ and approximate its limit behavior $\Ff(\al(0))$ by a uniform one ($\mu_i=\frac{1}{n}$).

In our experiments, the measure $\al(t)$ models the dynamics of several (2 to 4) flocks of birds moving towards each other, exhibiting a limit behavior of a single stable flock. As shown in Figures~\ref{fig:flockingDensity} and~\ref{fig:flockingPoints}, positions of the initial flocks are normally distributed, centered respectively at edges of a rectangle $(-4;2), (-4;-2), (4;2), (4;-2)$ with variance 1. Their velocities (displayed as arrows with lengths proportional to magnitudes in Figures~\ref{fig:flockingDensity} and~\ref{fig:flockingPoints}) are uniformly chosen within the quarter disk $[0;-0.1]\times [0.1;0]$. Their initial weights $\mu_i$ are normally distributed with mean 0.5 and sd 0.1, clipped by a ReLu and normalized. Figures~\ref{fig:flockingDensity} (representing densities) and~\ref{fig:flockingPoints} (depicting corresponding points' positions) show that for a set of $n=720$ particles, quite different limit behaviors are successfully retrieved by a simple network composed of {\color{red}five} elementary blocks with layers of dimensions $[{\color{blue} 2},
{\color{blue}10},
{\color{blue}20},
{\color{blue}40},
{\color{blue}60}]$, learnt with a Wasserstein~\citep{genevay2018learning} fitting criterion (computed with Sinkhorn's algorithm~\citep{cuturi2013sinkhorn}).

\newcommand{\sidecapY}[1]{ \begin{sideways}\parbox{.28\linewidth}{\centering #1}\end{sideways} }
\newcommand{\FigGF}[1]{\includegraphics[width=.33\linewidth]{img/flocking-6/limit#1}}
\newcommand{\FigRowGF}[1]{\FigGF{#1-init} & \FigGF{#1-true} & \FigGF{#1-predicted}}
\newcommand{\FigPointsGF}[1]{\includegraphics[width=.33\linewidth]{img/flocking-6/points#1}}
\newcommand{\FigRowPointsGF}[1]{\FigPointsGF{#1-init} & \FigPointsGF{#1-true} & \FigPointsGF{#1-predicted}}

\begin{figure}[h!]
	\centering
	\begin{tabular}{@{}c@{}c@{}c@{}c@{}}
	 \sidecapY{ $(a)$ }  & \FigRowGF{1} \\
	 \sidecapY{ $(b)$ }  & \FigRowGF{2} \\
	 \sidecapY{ $(c)$ }  & \FigRowGF{3} \\
	 & Initial $\al(0)$ &  $\Ff(\al(0))$ & Predicted 
	\end{tabular}
	\caption{
		Prediction of the asymptotic density of the flocking model, for various initial speed values $v(0)$ and $n=720$ particles. Eg. for top left cloud: (a) $v(0)=(0.050;-0.085)$; (b) $v(0)=(0.030;-0.094)$; (c) $v(0)=(0.056;-0.081)$.
		}
\label{fig:flockingDensity}
\end{figure}

\begin{figure}[h!]
	\centering
	\begin{tabular}{@{}c@{}c@{}c@{}c@{}}
	 \sidecapY{ $(a)$ }  & \FigRowPointsGF{1} \\
	 \sidecapY{ $(b)$ }  & \FigRowPointsGF{2} \\
	 \sidecapY{ $(c)$ }  & \FigRowPointsGF{3} \\
	 & Initial $\al(0)$ &  $\Ff(\al(0))$ & Predicted 
	\end{tabular}
	\caption{
		Prediction of particles' positions corresponding to Figure~\ref{fig:flockingDensity}. Dots' diameters are proportial to weights $\mu_i$ (predicted ones all have the same size since $\mu_i=\frac{1}{n}$).
\label{fig:flockingPoints}}
\end{figure}


\section*{Conclusion}

In this paper, we have proposed a new formalism for stochastic deep architectures, which can cope in a seamless way with both probability distributions and deterministic feature vectors. The salient features of these architectures are their robustness and approximation power with respect to the convergence in law, which is crucial to tackle high dimensional classification, regression and generation problems over the space of probability distributions.

\section*{Acknowledgements}

The work of Gwendoline de Bie was supported by a grant from R\'egion Ile-de-France.
The work of Gabriel Peyr\'e has been supported by the European Research Council (ERC project NORIA).

\bibliographystyle{apalike}
\bibliography{biblio}

\begin{thebibliography}{}

\bibitem[Arjovsky et~al., 2017]{arjovsky2017wasserstein}
Arjovsky, M., Chintala, S., and Bottou, L. (2017).
\newblock Wasserstein gan.
\newblock {\em arXiv preprint arXiv:1701.07875}.

\bibitem[Bassetti et~al., 2006]{bassetti2006minimum}
Bassetti, F., Bodini, A., and Regazzini, E. (2006).
\newblock On minimum kantorovich distance estimators.
\newblock {\em Statistics \& probability letters}, 76(12):1298--1302.

\bibitem[Bernton et~al., 2017]{bernton:hal-01517550}
Bernton, E., Jacob, P., Gerber, M., and Robert, C. (2017).
\newblock {Inference in generative models using the Wasserstein distance}.
\newblock working paper or preprint.

\bibitem[Chen et~al., 2014]{chen2014unsupervised}
Chen, X., Cheng, X., and Mallat, S. (2014).
\newblock Unsupervised deep haar scattering on graphs.
\newblock {\em Advances in Neural Information Processing Systems}, pages
  1709--1717.

\bibitem[Cheng et~al., 2016]{cheng2016deep}
Cheng, X., Chen, X., and Mallat, S. (2016).
\newblock Deep haar scattering networks.
\newblock {\em Information and Inference: A Journal of the IMA}, 5(2):105--133.

\bibitem[Cohen and Welling, 2016]{cohen2016group}
Cohen, T. and Welling, M. (2016).
\newblock Group equivariant convolutional networks.
\newblock {\em International Conference on Machine Learning}, pages 2990--2999.

\bibitem[Courty et~al., 2017]{courty2017optimal}
Courty, N., Flamary, R., Tuia, D., and Rakotomamonjy, A. (2017).
\newblock Optimal transport for domain adaptation.
\newblock {\em IEEE transactions on pattern analysis and machine intelligence},
  39(9):1853--1865.

\bibitem[Cucker and Smale, 2007]{cucker2007mathematics}
Cucker, F. and Smale, S. (2007).
\newblock On the mathematics of emergence.
\newblock {\em Japanese Journal of Mathematics}, 2(1):197--227.

\bibitem[Cuturi, 2013]{cuturi2013sinkhorn}
Cuturi, M. (2013).
\newblock Sinkhorn distances: Lightspeed computation of optimal transport.
\newblock {\em Advances in neural information processing systems}, pages
  2292--2300.

\bibitem[Cybenko, 1989]{cybenko1989approximation}
Cybenko, G. (1989).
\newblock Approximation by superpositions of a sigmoidal function.
\newblock {\em Mathematics of control, signals and systems}, 2(4):303--314.

\bibitem[Fan et~al., 2016]{fan2016point}
Fan, H., Su, H., and Guibas, L. (2016).
\newblock A point set generation network for 3d object reconstruction from a
  single image.
\newblock {\em arXiv preprint arXiv:1612.00603}.

\bibitem[Frogner et~al., 2015]{frogner2015learning}
Frogner, C., Zhang, C., Mobahi, H., Araya, M., and Poggio, T.~A. (2015).
\newblock Learning with a wasserstein loss.
\newblock {\em Advances in Neural Information Processing Systems}, pages
  2053--2061.

\bibitem[Genevay et~al., 2018]{genevay2018learning}
Genevay, A., Peyr{\'e}, G., and Cuturi, M. (2018).
\newblock Learning generative models with sinkhorn divergences.
\newblock {\em International Conference on Artificial Intelligence and
  Statistics}, pages 1608--1617.

\bibitem[Gens and Domingos, 2014]{gens2014symmetry}
Gens, R. and Domingos, P.~M. (2014).
\newblock Deep symmetry networks.
\newblock {\em Advances in Neural Information Processing Systems 27}, pages
  2537--2545.

\bibitem[Glorot and Bengio, 2010]{glorot2010understanding}
Glorot, X. and Bengio, Y. (2010).
\newblock Understanding the difficulty of training deep feedforward neural
  networks.
\newblock {\em Proceedings of the thirteenth international conference on
  artificial intelligence and statistics}, pages 249--256.

\bibitem[Godin et~al., 2007]{godin2007measuring}
Godin, M., Bryan, A.~K., Burg, T.~P., Babcock, K., and Manalis, S.~R. (2007).
\newblock Measuring the mass, density, and size of particles and cells using a
  suspended microchannel resonator.
\newblock {\em Applied physics letters}, 91(12):123121.

\bibitem[Goodfellow et~al., 2014]{goodfellow2014generative}
Goodfellow, I., Pouget-Abadie, J., Mirza, M., Xu, B., Warde-Farley, D., Ozair,
  S., Courville, A., and Bengio, Y. (2014).
\newblock Generative adversarial nets.
\newblock In {\em Advances in neural information processing systems}, pages
  2672--2680.

\bibitem[Grover et~al., 2011]{grover2011measuring}
Grover, W.~H., Bryan, A.~K., Diez-Silva, M., Suresh, S., Higgins, J.~M., and
  Manalis, S.~R. (2011).
\newblock Measuring single-cell density.
\newblock {\em Proceedings of the National Academy of Sciences},
  108(27):10992--10996.

\bibitem[Guckenheimer et~al., 1977]{guckenheimer1977dynamics}
Guckenheimer, J., Oster, G., and Ipaktchi, A. (1977).
\newblock The dynamics of density dependent population models.
\newblock {\em Journal of Mathematical Biology}, 4(2):101--147.

\bibitem[Guttenberg et~al., 2016]{guttenberg2016permutation}
Guttenberg, N., Virgo, N., Witkowski, O., Aoki, H., and Kanai, R. (2016).
\newblock Permutation-equivariant neural networks applied to dynamics
  prediction.
\newblock {\em arXiv preprint arXiv:1612.04530}.

\bibitem[Hashimoto et~al., 2016]{hashimoto2016learning}
Hashimoto, T., Gifford, D., and Jaakkola, T. (2016).
\newblock Learning population-level diffusions with generative rnns.
\newblock {\em International Conference on Machine Learning}, pages 2417--2426.

\bibitem[Henaff et~al., 2015]{graph}
Henaff, M., Bruna, J., and LeCun, Y. (2015).
\newblock Deep convolutional networks on graph-structured data.
\newblock {\em arXiv preprint arXiv:1506.05163}.

\bibitem[Hinton et~al., 2012]{speech}
Hinton, G., Deng, L., Yu, D., Dahl, G., Mohamed, A.-r., Jaitly, N., Senior, A.,
  Vanhoucke, V., Nguyen, P., Kingsbury, B., and Sainath, T. (2012).
\newblock Deep neural networks for acoustic modeling in speech recognition.
\newblock {\em IEEE Signal Processing Magazine}, 29:82--97.

\bibitem[Hochreiter and Schmidhuber, 1997]{hochreiter1997long}
Hochreiter, S. and Schmidhuber, J. (1997).
\newblock Long short-term memory.
\newblock {\em Neural computation}, 9(8):1735--1780.

\bibitem[Huang et~al., 2016]{huang2016supervised}
Huang, G., Guo, C., Kusner, M.~J., Sun, Y., Sha, F., and Weinberger, K.~Q.
  (2016).
\newblock Supervised word mover's distance.
\newblock {\em Advances in Neural Information Processing Systems}, pages
  4862--4870.

\bibitem[Kingma and Ba, 2014]{kingma2014adam}
Kingma, D.~P. and Ba, J. (2014).
\newblock Adam: A method for stochastic optimization.
\newblock {\em arXiv preprint arXiv:1412.6980}.

\bibitem[Kingma and Welling, 2013]{kingma2013auto}
Kingma, D.~P. and Welling, M. (2013).
\newblock Auto-encoding variational bayes.
\newblock {\em arXiv preprint arXiv:1312.6114}.

\bibitem[Krizhevsky et~al., 2012]{imagenet}
Krizhevsky, A., Sutskever, I., and Hinton, G.~E. (2012).
\newblock Imagenet classification with deep convolutional neural networks.
\newblock {\em Advances in Neural Information Processing Systems 25}, pages
  1097--1105.

\bibitem[LeCun et~al., 1998]{document}
LeCun, Y., Bottou, L., Bengio, Y., and Haffner, P. (1998).
\newblock Gradient-based learning applied to document recognition.
\newblock {\em Proceedings of the IEEE}, 86(11):2278--2324.

\bibitem[Lee et~al., 2009]{audio}
Lee, H., Pham, P., Largman, Y., and Ng, A.~Y. (2009).
\newblock Unsupervised feature learning for audio classification using
  convolutional deep belief networks.
\newblock {\em Advances in Neural Information Processing Systems 22}, pages
  1096--1104.

\bibitem[Leshno et~al., 1993]{leshno1993multilayer}
Leshno, M., Lin, V.~Y., Pinkus, A., and Schocken, S. (1993).
\newblock Multilayer feedforward networks with a nonpolynomial activation
  function can approximate any function.
\newblock {\em Neural networks}, 6(6):861--867.

\bibitem[Qi et~al., 2016]{qi2016pointnet}
Qi, C.~R., Su, H., Mo, K., and Guibas, L.~J. (2016).
\newblock Pointnet: Deep learning on point sets for 3d classification and
  segmentation.
\newblock {\em arXiv preprint arXiv:1612.00593}.

\bibitem[Qi et~al., 2017]{qi2017pointnet++}
Qi, C.~R., Yi, L., Su, H., and Guibas, L.~J. (2017).
\newblock Pointnet++: Deep hierarchical feature learning on point sets in a
  metric space.
\newblock {\em Advances in Neural Information Processing Systems}, pages
  5105--5114.

\bibitem[Ravanbakhsh et~al., 2016]{ravanbakhsh2016deep}
Ravanbakhsh, S., Schneider, J., and Poczos, B. (2016).
\newblock Deep learning with sets and point clouds.
\newblock {\em arXiv preprint arXiv:1611.04500}.

\bibitem[Ravanbakhsh et~al., 2017]{ravanbakhsh2017equivariance}
Ravanbakhsh, S., Schneider, J., and Poczos, B. (2017).
\newblock Equivariance through parameter-sharing.
\newblock {\em arXiv preprint arXiv:1702.08389}.

\bibitem[Rolet et~al., 2016]{rolet2016fast}
Rolet, A., Cuturi, M., and Peyr{\'e}, G. (2016).
\newblock Fast dictionary learning with a smoothed wasserstein loss.
\newblock {\em Artificial Intelligence and Statistics}, pages 630--638.

\bibitem[Santambrogio, 2015]{santambrogio2015optimal}
Santambrogio, F. (2015).
\newblock Optimal transport for applied mathematicians.
\newblock {\em Birk{\"a}user, NY}.

\bibitem[Tereshko, 2000]{tereshko2000reaction}
Tereshko, V. (2000).
\newblock Reaction-diffusion model of a honeybee colony's foraging behaviour.
\newblock In {\em International Conference on Parallel Problem Solving from
  Nature}, pages 807--816. Springer.

\bibitem[Vinyals et~al., 2015]{vinyals2015order}
Vinyals, O., Bengio, S., and Kudlur, M. (2015).
\newblock Order matters: Sequence to sequence for sets.
\newblock {\em arXiv preprint arXiv:1511.06391}.

\bibitem[Vinyals et~al., 2016]{44871}
Vinyals, O., Bengio, S., and Kudlur, M. (2016).
\newblock Order matters: Sequence to sequence for sets.
\newblock In {\em International Conference on Learning Representations (ICLR)}.

\bibitem[Wu et~al., 2015a]{shapenets}
Wu, Z., Song, S., Khosla, A., Yu, F., Zhang, L., Tang, X., and Xiao, J.
  (2015a).
\newblock 3d shapenets: A deep representation for volumetric shapes.
\newblock {\em Proceedings of the IEEE conference on computer vision and
  pattern recognition}, pages 1912--1920.

\bibitem[Wu et~al., 2015b]{wu20153d}
Wu, Z., Song, S., Khosla, A., Yu, F., Zhang, L., Tang, X., and Xiao, J.
  (2015b).
\newblock 3d shapenets: A deep representation for volumetric shapes.
\newblock {\em Proceedings of the IEEE conference on computer vision and
  pattern recognition}, pages 1912--1920.

\bibitem[Zaheer et~al., 2017]{ravanbakhsh2017deepsets}
Zaheer, M., Kottur, S., Ravanbakhsh, S., Poczos, B., Salakhutdinov, R.~R., and
  Smola, A.~J. (2017).
\newblock Deep sets.
\newblock {\em Advances in Neural Information Processing Systems 30}, pages
  3391--3401.

\end{thebibliography}

\newpage

\appendix


\newpage
\section*{Supplementary}

\section{Useful Results}

We detail below operators which are useful to decompose a building block $T_f$ into push-forward and integration.
In the following, to ease the reading, we often denote $\Wass_\Xx$ the $\Wass_1$ distance on a space $\Xx$ (e.g. $\Xx=\RR^q$).



\paragraph{Push-forward.} 

The push-forward operator allows for modifications of the support while maintaining the geometry of the input measure.
\begin{defn}[Push-forward]
For a function $f : \Xx \rightarrow \Yy$, we define the push-forward  $f_\sharp \al \in \Mm(\Yy)$ of $\al \in \Mm(\Xx)$ by $T$ as defined by
\eql{\label{eq-push-foward}
	\foralls g \in \Cc(\Yy), \quad 
	\int_\Yy g \d(f_\sharp \al) \eqdef \int_\Xx g(f(x)) \d\al(x).	
}
\end{defn}
\noindent Note that $f_\sharp : \Mm(\Xx) \rightarrow \Mm(\Yy)$ is a linear operator.

\begin{prop}[Lipschitzness of push-forward]\label{prop-lip-pushforward}
	One has
	\begin{align}
		\Wass_\Yy(f_\sharp \al,f_\sharp \be) &\leq \Lip(f) \Wass_\Xx(\al,\be), \label{lip_push_1} \\
		\Wass_\Yy(f_\sharp \al,g_\sharp \al) &\leq \norm{f-g}_{L^1(\al)} \label{lip_push_2}, 
	\end{align}
	where $\Lip(f)$ designates the Lipschitz constant of $f$.
\end{prop}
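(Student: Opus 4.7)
The plan is to prove each inequality using whichever of the two equivalent formulations of $\Wass_1$ (the primal coupling formulation~\eqref{W_p} or the Kantorovich--Rubinstein dual) is most convenient. Both proofs are short and essentially follow by exhibiting a natural coupling built from the map(s) at hand, or by pulling back test functions.

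For~\eqref{lip_push_1}, I would work in the primal. Let $\pi \in \Mm_+^1(\Xx \times \Xx)$ be an optimal coupling achieving $\Wass_\Xx(\al,\be)$, i.e.\ with marginals $\al$ and $\be$ and total cost $\int \|x-x'\|\,\d\pi(x,x') = \Wass_\Xx(\al,\be)$. Define the pushed coupling $\tilde\pi \eqdef (f,f)_\sharp \pi \in \Mm_+^1(\Yy \times \Yy)$. A direct change of variable via~\eqref{eq-push-foward} shows that $\tilde\pi$ has marginals $f_\sharp \al$ and $f_\sharp \be$, so it is admissible in the Wasserstein problem defining $\Wass_\Yy(f_\sharp\al, f_\sharp\be)$. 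The Lipschitz assumption on $f$ then gives
\eq{
    \Wass_\Yy(f_\sharp\al, f_\sharp\be) \leq \int_{\Yy \times \Yy} \|y-y'\|\,\d\tilde\pi(y,y') = \int_{\Xx \times \Xx} \|f(x)-f(x')\|\,\d\pi(x,x') \leq \Lip(f)\,\Wass_\Xx(\al,\be).
}
(One could equally well use the dual: any $1$-Lipschitz $g$ on $\Yy$ pulls back to the $\Lip(f)$-Lipschitz test function $g \circ f$ on $\Xx$, and by~\eqref{eq-push-foward} $\int g\,\d(f_\sharp\al - f_\sharp\be) = \int g\circ f\,\d(\al-\be)$.)

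For~\eqref{lip_push_2}, the same idea applies with an even simpler coupling: the deterministic coupling $\tilde\pi \eqdef (f,g)_\sharp \al$ has marginals $f_\sharp \al$ and $g_\sharp \al$ by~\eqref{eq-push-foward}, and its cost is $\int_\Xx \|f(x)-g(x)\|\,\d\al(x) = \|f-g\|_{L^1(\al)}$. Admissibility in the primal Wasserstein problem immediately yields the bound.

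There is no real obstacle here: both inequalities reduce to constructing an explicit admissible coupling and reading off the cost, with the only ingredients being the change-of-variables formula~\eqref{eq-push-foward} and the Lipschitz inequality $\|f(x)-f(x')\|\leq \Lip(f)\|x-x'\|$. The most delicate point is purely bookkeeping, namely verifying that the pushed measure $(f,f)_\sharp \pi$ (resp.\ $(f,g)_\sharp \al$) indeed has the correct marginals, which is a one-line application of~\eqref{eq-push-foward} to test functions depending on a single variable.
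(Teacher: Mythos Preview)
Your proof is correct. The paper establishes both inequalities via the Kantorovich--Rubinstein dual: for~\eqref{lip_push_1} it observes that $h\circ f/\Lip(f)$ is $1$-Lipschitz whenever $h$ is, so $\int h\circ f\,\d(\al-\be)\leq \Lip(f)\Wass_\Xx(\al,\be)$; for~\eqref{lip_push_2} it uses $|h(f(x))-h(g(x))|\leq\|f(x)-g(x)\|$ pointwise. You instead work in the primal, exhibiting the explicit couplings $(f,f)_\sharp\pi$ and $(f,g)_\sharp\al$. Both routes are standard and equally short; your primal argument has the mild advantage of not invoking the duality theorem (it only needs the definition~\eqref{W_p}), while the paper's dual argument aligns with the $\Wass_1$-as-a-norm formula already recalled in the text. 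You in fact mention the dual alternative yourself for~\eqref{lip_push_1}, which is exactly what the paper does.
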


\begin{proof}
$\forall h:\Yy\to\RR$ s.t. $\Lip(h) \leq 1, \frac{h \circ f}{\text{Lip}(f)}$ is 1-Lipschitz, therefore
$$ \int_\Xx \frac{h \circ f}{\Lip(f)} \mathrm{d}(\al-\be) \leq \Wass_\Xx(\al,\be)
$$
hence inequality (\ref{lip_push_1}). Similarly, $\forall h$ s.t. $\Lip(h) \leq 1,$
$$ \int_\Xx (h \circ f - h \circ g) \mathrm{d}\al \leq \int_\Xx \norm{f(x)-g(x)}_{2} \mathrm{d}\al(x) $$
hence inequality (\ref{lip_push_2}).
\end{proof}

\paragraph{Integration.}

We now define a (partial) integration operator.

\begin{defn}[Integration]
For $f \in \Cc(\Zz \times \Xx; \Yy=\RR^r)$, and $\al \in \Mm(\Xx)$ we denote
\eq{
	f[\cdot,\al] \eqdef \int_{\Xx} f(\cdot, x) \d \al(x) : \Zz \rightarrow \Yy.
}
\end{defn}

\begin{prop}[Lipschitzness of integration]\label{prop-lip-integration}
	With some fixed $\zeta \in \Mm_+^1(\Zz)$, one has
	\begin{align*}
		\norm{f[\cdot,\al] - f[\cdot,\be]}_{L^1(\zeta)} \leq r \Lip_2(f) \Wass_\Xx(\al,\be).
	\end{align*}
	where we denoted by $\Lip_2(f)$ a bound on the Lipschitz contant of the function $f(z,\cdot)$ for all $z$.
\end{prop}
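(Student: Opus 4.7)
The plan is to reduce the vector-valued statement to the scalar Kantorovich--Rubinstein duality already recalled in the paper, applied componentwise. First I would fix $z \in \Zz$ and look at each coordinate $f_k(z,\cdot) : \Xx \to \RR$ of $f(z,\cdot)$. By the definition of $\Lip_2(f)$, each of these scalar maps is $\Lip_2(f)$-Lipschitz, so the map $f_k(z,\cdot)/\Lip_2(f)$ is admissible in the dual formulation
\eq{
\Wass_1(\al,\be) = \sup_{\Lip(g) \leq 1} \int_\Xx g \, \d(\al-\be).
}
Thus, for every $z$ and every $k=1,\ldots,r$,
\eq{
|f_k[z,\al] - f_k[z,\be]| = \Big| \int_\Xx f_k(z,x)\,\d(\al-\be)(x)\Big| \leq \Lip_2(f)\, \Wass_\Xx(\al,\be).
}

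The next step is to pass from the scalar component bounds to a bound on the vector norm on $\RR^r$, which is where the prefactor $r$ enters. Controlling the Euclidean (or any $\ell^p$) norm by the sum of the absolute values of the components gives
\eq{
\|f[z,\al]-f[z,\be]\| \leq \sum_{k=1}^r |f_k[z,\al]-f_k[z,\be]| \leq r\, \Lip_2(f)\, \Wass_\Xx(\al,\be),
}
uniformly in $z$. Finally, I would integrate this inequality against the probability measure $\zeta$ on $\Zz$: since the right-hand side is independent of $z$ and $\zeta(\Zz) = 1$, one obtains
\eq{
\|f[\cdot,\al]-f[\cdot,\be]\|_{L^1(\zeta)} = \int_\Zz \|f[z,\al]-f[z,\be]\|\,\d\zeta(z) \leq r\, \Lip_2(f)\, \Wass_\Xx(\al,\be),
}
which is the announced bound.

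There is no genuine obstacle here: the statement is essentially Kantorovich duality applied componentwise, with the factor $r$ arising from the standard comparison between the Euclidean norm on $\RR^r$ and the $\ell^1$ norm. The only mild subtlety is to be explicit about the fact that $\Lip_2(f)$ controls each scalar coordinate Lipschitz constant (which is immediate from the chosen norm on $\RR^r$), and to note that the bound holds uniformly in $z$, so the integration against $\zeta$ is trivial and does not even use its being a probability measure beyond having total mass one.
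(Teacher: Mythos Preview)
Your proof is correct and follows essentially the same approach as the paper: apply Kantorovich--Rubinstein duality to each scalar component $f_k(z,\cdot)$, bound the Euclidean norm by the $\ell^1$ norm to pick up the factor $r$, and integrate against the probability measure $\zeta$. The only cosmetic difference is that you first establish the pointwise bound in $z$ and then integrate, whereas the paper carries the outer integral throughout; the substance is identical.
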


\begin{proof}
\eq{
\begin{split}
\lvert \lvert f[\cdot,\al] & - f[\cdot,\be] \lvert \lvert_{L^1(\zeta)} \\
& = \int_\Zz \| f[\cdot,\al](z) - f[\cdot,\be](z) \|_2 \d\zeta(z) \\
& = \int_\Zz \left\| \int_\Xx f(z,x) \d(\al - \be)(x) \right\|_2 \d\zeta(z) \\
& \leq \int_\Zz \left\| \int_\Xx f(z,x) \d(\al - \be)(x) \right\|_1 \d\zeta(z) \\
& = \int_\Zz \sum_{i=1}^r \left| \int_\mathcal{X} f_i(z,x) \d(\al-\be)(x) \right| \d\zeta(z) \\
& \leq \sum_{i=1}^r \Lip_2(f_i) \Wass_\Xx(\al,\be) \\
& \leq r \Lip_2(f) \Wass_\Xx(\al,\be)
\end{split}
}
where we denoted by $\Lip_2(f_i)$ a bound on the Lipschitz contant of the function $f_i(z,\cdot)$ ($i$-th component of $f$) for all $z$, since again, $\frac{f_i}{\Lip_2(f_i)}$ is 1-Lipschitz.
\end{proof}

\paragraph{Approximation by discrete measures}

The following lemma shows how to control the approximation error between an arbitrary random variable and a discrete variable obtained by computing moments agains localized functions on a grid. 

\begin{lem}\label{lem-proof-approx-thm}
	Let $\left( S_j \right)_{j=1}^N$ be a partition of a domain including $\Omega$ ($S_j \subset \RR^d$) and let $x_j \in S_j$.
	Let $( \phi_j )_{j=1}^N$ a set of bounded functions $\phi_j : \Om \rightarrow \RR$ supported on $S_j$, such that $\sum_j \phi_j=1$ on $\Om$. 
	For $\al \in \Mm_+^1(\Omega)$, we denote $\hat{\al}_N \eqdef \sum_{j=1}^N \al_j \delta_{x_j}$ with $\al_j \eqdef \int_{S_j} \phi_j\d\al$. One has, denoting $\De_j \eqdef \max_{x \in S_J} \norm{x_j-x}$, 
	\eq{
		\Wass_1( \hat{\al}_N, \al) \leq \max_{1 \leq j \leq N} \De_j.
	}
\end{lem}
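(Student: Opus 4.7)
The plan is to use the Kantorovich--Rubinstein dual formulation of the $1$-Wasserstein distance recalled in the excerpt, namely
$$
\Wass_1(\hat\al_N,\al) = \sup_{\Lip(g)\leq 1} \int_\Om g \, \d(\hat\al_N-\al),
$$
and then to exploit the fact that $(\phi_j)_j$ forms a partition of unity on $\Om$ to write both $\int g\,\d\hat\al_N$ and $\int g\,\d\al$ as sums of local integrals against the $\phi_j$, so that the difference collapses into a manifestly small quantity.

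Concretely, I would first fix any $1$-Lipschitz $g:\Om\to\RR$. Using $\sum_j \phi_j = 1$ on $\Om$ and the fact that each $\phi_j$ is supported in $S_j$, I rewrite
$$
\int_\Om g\,\d\al \;=\; \sum_{j=1}^N \int_{S_j} g(x)\,\phi_j(x)\,\d\al(x),
$$
while by definition of $\hat\al_N$ and $\al_j$,
$$
\int_\Om g\,\d\hat\al_N \;=\; \sum_{j=1}^N g(x_j)\,\al_j \;=\; \sum_{j=1}^N \int_{S_j} g(x_j)\,\phi_j(x)\,\d\al(x).
$$
Subtracting, the difference is $\sum_j \int_{S_j}(g(x_j)-g(x))\,\phi_j(x)\,\d\al(x)$.

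The next step is the pointwise bound: for $x\in S_j$, the Lipschitz property gives $|g(x_j)-g(x)|\leq \|x_j-x\|\leq \De_j$, and since the $\phi_j$ are nonnegative (being P1 hat functions, as in the application to Theorem~\ref{thm-meas-valued}) we obtain
$$
\left|\int_\Om g\,\d(\hat\al_N-\al)\right|
\;\leq\; \sum_{j=1}^N \De_j \int_{S_j}\phi_j(x)\,\d\al(x)
\;\leq\; \max_{1\leq j \leq N}\De_j \cdot \sum_{j=1}^N \int_\Om \phi_j\,\d\al.
$$
Applying once more $\sum_j \phi_j = 1$ on $\Om$ together with $\al\in\Mm_+^1(\Om)$ shows that the last sum equals $1$, giving the stated bound uniformly in $g$ and hence for the supremum. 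There is no real obstacle here; the only mild point to be careful about is that the argument relies on $\phi_j\geq 0$ (implicit in the P1 finite element context) so that the inequality $|g(x_j)-g(x)|\phi_j(x)\leq \De_j\phi_j(x)$ is legitimate.
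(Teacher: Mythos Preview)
Your proof is correct and complete. It differs from the paper's argument in that the paper works on the \emph{primal} side: it constructs an explicit transport plan $\pi$ between $\al$ and $\hat\al_N$ by
\[
\int_{\Om^2} f\,\d\pi \;=\; \sum_{j=1}^N \int_{S_j} f(x,x_j)\,\phi_j(x)\,\d\al(x),
\]
verifies that both marginals are correct, and then bounds $\int \norm{x-y}\,\d\pi$ directly. You instead use the Kantorovich--Rubinstein \emph{dual} and bound $\int g\,\d(\hat\al_N-\al)$ uniformly over $1$-Lipschitz $g$. Both routes are standard and lead to the same final chain of inequalities; your dual argument is a touch shorter since it skips the marginal checks, while the paper's primal construction makes the intuition ``all mass sitting in $S_j$ is transported to $x_j$'' explicit. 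Your caveat about needing $\phi_j\geq 0$ is well taken and applies equally to the paper's proof (it is required for $\pi$ to be a nonnegative measure and for the final inequality there); in the intended application with P1 hat functions this is of course satisfied.
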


\begin{proof}
We define $\pi \in \Mm_+^1(\Om^2)$, a transport plan coupling marginals $\al$ and $\hat{\al}_N$, by imposing for all $f \in \Cc(\Om^2)$, 
\eq{
	\int_{\Om^2} f \d\pi = \sum_{j=1}^N \int_{S_j} f(x,x_j)\phi_j(x) \d\al(x).
}
$\pi$ indeed is a transport plan, since for all $g \in \Cc(\Om)$, 
\begin{align*}
& \int_{\Om^2} g(x) \d\pi(x,y) = \sum_{j=1}^N \int_{S_j} g(x) \phi_j(x) \d\al(x)  \\
 &= \sum_{j=1}^N \int_{\Omega} g(x) \phi_j(x) \d\al(x) \\
 &= \int_{\Omega} g(x) \left( \sum_{j=1}^N \phi_j(x) \right) \d\al(x)
 = \int_{\Omega} g \d\al.
\end{align*}
Also,
\begin{align*}
&\int_{\Om^2} g(y) \d\pi(x,y) = \sum_{j=1}^N \int_{S_j} g(x_j) \phi_j \d\al \\
 &= \sum_{j=1}^N  g(x_j) \int_{\Omega} \phi_j \d\al 
 = \sum_{j=1}^N  \al_j g(x_j) 
 = \int_\Omega g \d\hat{\al}_N.
\end{align*}
By definition of the Wasserstein-1 distance, 
\begin{align*}
&\Wass(\hat{\al}_N,\al)  \leq \int_{\Om^2} \norm{x-y} \d\pi(x,y) \\
& = \sum_{j=1}^N \int_{S_j} \phi_j(x) \norm{x-x_j} \d\al(x) 
 \leq \sum_{j=1}^N \int_{S_j} \phi_j \De_j \d\al \\
& \leq \left( \sum_{i=1}^N \int_{\Omega} \phi_i \d\al \right) \max_{1 \leq j \leq N} \De_j 
 = \max_{1 \leq j \leq N} \De_j.
\end{align*}
\end{proof}

\section{Proof of Proposition~\ref{propEB}}\label{sec-proof-propEB}

Let us stress that the elementary block $T_f(X)$ defined in~\eqref{eq-elem-block-defn} only depends on the law $\al_X$. In the following, for a measure $\al$ we denote $\Tt_f(\al_X)$ the law of $T_f(X)$. The goal is thus to show that $\Tt_f$ is Lipschitz for the Wasserstein distance. 

For a measure $\al \in \Mm(\Xx)$ (where $\Xx=\RR^q$), the measure $\be=\Tt_\Ff(\al) \in \Mm(\Yy)$ (where $\Yy=\RR^r$) is defined via the identity, for all $g \in \Cc(\Yy)$,
\eq{
	\int_\Yy g(y) \d\be(y) = \int_\Xx g \left( \int_\Xx f(z,x) \d\al(x) \right) \d\al(z). 
}
Let us first remark that an elementary block, when view as operating on measures, can be decomposed using the aforementioned push-forward and integration operator, since 
\eq{
	\Tt_\Ff(\al) = f[\cdot,\al]_\sharp \al.  
}

Using the fact that $\Wass_\Xx$ is a norm,
\begin{align*}
& \Wass_\Xx(\Tt_\Ff(\al), \Tt_f(\be)) \\
& \leq \Wass_\Xx(\Tt_\Ff(\al),f[\cdot,\be]_\sharp \al) + \Wass_\Xx(f[\cdot,\be]_\sharp \al,\Tt_f(\be)) \\
& \leq \norm{ f[\cdot,\al]-f[\cdot,\be] }_{L^1(\al)} + \Lip(f[\cdot,\be]) \Wass_\Xx(\al,\be), 
\end{align*}
where we used the Lipschitzness of the push-forward, Proposition~\ref{prop-lip-pushforward}.
Moreover, for $(z_1,z_2) \in \Xx^2$,
\begin{align*}
& \norm{ f[z_1,\be]-f[z_2,\be] }_2 \leq \norm{ f[z_1,\be]-f[z_2,\be] }_1 \\
& = \norm{ \int_\Xx (f(z_1,\cdot)-f(z_2,\cdot))\d\be }_1 \\
& = \sum_{i=1}^r \lvert \int_\Xx (f_i(z_1,\cdot)-f_i(z_2,\cdot)) \d\be \lvert \\
& \leq \sum_{i=1}^r \int_\Xx \lvert f_i(z_1,\cdot)-f_i(z_2,\cdot) \lvert \d\be \\
& \leq \sum_{i=1}^r \Lip_1(f_{i}) \norm{ z_1 - z_2 }_2 
\leq r \Lip_1(f) \norm{ z_1 - z_2 }_2,  \\
\end{align*}
where we denoted by $\Lip_1(f_i)$ a bound on the Lipschitz contant of the function $f_i(\cdot,x)$ for all $x$. 
Hence $\Lip(f[\cdot,\be]) \leq r \Lip_1(f)$. In addition, Lipschitzness of integration, Proposition~\ref{prop-lip-integration} yields
\eq{
\begin{split}
\Wass_\Xx(\Tt_\Ff(\al), \Tt_f(\be)) \leq & r \Lip_2(f) \Wass_\Xx(\al, \be) \\
& + r \Lip_1(f) \Wass_\Xx(\al,\be) \\
\end{split}
}

\section{Parametric Push Forward}

An ingredient of the proof of the universality Theorem~\ref{thm-meas-valued} is the construction of a noise-reshaping function $H$ which maps a uniform noise to another distribution parametrized by $b$.

\begin{lem}\label{lemma-noise-reshaping}
	There exists a continuous map $(b,u) \in \Si_m \times [0,1]^r \mapsto H(b,u)$ so that the random vector $H(b,U)$ has law $\be \eqdef (1-\epsilon) D_\Yy^*(b) + \epsilon \Uu$, where $U$ has density $\Uu$ (uniformly distributed on $[0,1]^r$).
\end{lem}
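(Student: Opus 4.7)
My plan is to reduce the $r$-dimensional construction to a one-dimensional inverse-CDF (quantile) construction, by composing with a fixed measure-preserving continuous surjection.

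First, I fix once and for all a continuous surjection $\gamma : [0,1] \to [0,1]^r$ whose pushforward of the one-dimensional Lebesgue measure $du_1$ equals the uniform law $\Uu$ on $[0,1]^r$; such measure-preserving space-filling curves can be built explicitly by iterated dyadic subdivision (at level $k$, a continuous path visits the $2^{rk}$ dyadic cells of side $2^{-k}$ in $[0,1]^r$, matching adjacent endpoints, and spends arc-parameter length $2^{-rk}$ equal to the volume of each cell). By surjectivity, for each grid point $y_j$ I fix once a preimage $t_j \in \gamma^{-1}(\{y_j\}) \subset [0,1]$, independent of $b$.

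Next, introduce the one-dimensional auxiliary measure
\[
\tilde\beta_b \;:=\; \epsilon \, du_1 \;+\; (1-\epsilon)\sum_{j=1}^m b_j \, \delta_{t_j}.
\]
A direct computation gives $\gamma_\sharp \tilde\beta_b = \epsilon \, \Uu + (1-\epsilon) D_\Yy^*(b) = \beta$, so it suffices to build a continuous $\tilde H : \Si_m \times [0,1] \to [0,1]$ with $\tilde H(b,\cdot)_\sharp du_1 = \tilde\beta_b$, and then set $H(b, u) := \gamma(\tilde H(b, u_1))$, which is continuous in $(b,u)$ by composition and satisfies $H(b,U) \sim \gamma_\sharp \tilde\beta_b = \beta$.

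For the one-dimensional step, the CDF $F_b(s) := \epsilon s + (1-\epsilon)\sum_{j : t_j \leq s} b_j$ of $\tilde\beta_b$ is strictly increasing with slope bounded below by $\epsilon$ and has jumps of height $(1-\epsilon) b_j$ at each $t_j$, so its quantile function $F_b^{-1} : [0,1] \to [0,1]$ is continuous (constant on the plateaus corresponding to the jumps, $1/\epsilon$-Lipschitz elsewhere). Setting $\tilde H(b, u_1) := F_b^{-1}(u_1)$, continuity in $u_1$ follows from the Lipschitz property, while continuity in $b$ follows from the uniform bound $\|F_b - F_{b'}\|_\infty \leq (1-\epsilon)\|b - b'\|_1$ combined with the standard sandwich $F_b^{-1}(u_1 - \delta) \leq F_{b'}^{-1}(u_1) \leq F_b^{-1}(u_1 + \delta)$ for $\delta := \|F_b - F_{b'}\|_\infty$, which yields $|F_{b'}^{-1}(u_1) - F_b^{-1}(u_1)| \leq \delta/\epsilon$ and hence joint continuity.

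The main obstacle is constructing $\gamma$ with \emph{exactly} uniform pushforward: classical Peano or Hilbert curves are continuous surjections but do not automatically preserve Lebesgue measure. This is addressed by enforcing the equality of dyadic volumes at every level of the subdivision --- routine but combinatorially tedious. Alternatively, one could bypass $\gamma$ entirely and build $H$ directly in $r$ dimensions via a Knothe-Rosenblatt coordinate-wise recursion, at the cost of a more delicate joint-continuity argument at points where the conditional distribution transitions between its atomic and continuous components.
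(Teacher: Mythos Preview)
Your proof is correct and follows a genuinely different route from the paper. The paper dispatches the lemma in two lines by invoking the optimal transport (Brenier) map, or alternatively the Dacorogna--Moser construction, from $\Uu$ to $\be$, on the grounds that both measures ``have densities and have support on a convex set.'' Your construction instead reduces to a one-dimensional quantile transform via a measure-preserving space-filling curve and checks joint continuity in $(b,u)$ by hand, with an explicit modulus $(1-\epsilon)\|b-b'\|_1/\epsilon$ in the $b$ variable. This is both more elementary and in fact more careful: the target $\be = (1-\epsilon)D_\Yy^*(b) + \epsilon\Uu$ carries atoms at the grid points $y_j$ and hence does \emph{not} have a density, so neither Caffarelli-type regularity for Brenier maps nor the Dacorogna--Moser flow applies as stated in the paper; moreover the paper only argues continuity of $H(b,\cdot)$ for fixed $b$ and never addresses continuity in $b$. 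Your quantile construction handles the atoms transparently through the plateaus of $F_b^{-1}$, and the $1/\epsilon$-Lipschitz bound then gives joint continuity for free. One minor remark: the standard Hilbert curve is already Lebesgue-preserving (each level-$k$ subinterval of length $2^{-rk}$ maps onto a dyadic cube of the same volume), so your closing caveat about ``not automatically'' measure-preserving is stronger than necessary; the Knothe--Rosenblatt alternative you mention would also work but is indeed more delicate at the atomic/continuous interface.
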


\begin{proof}
	Since both the input measure $\Uu$ and the output measure $\be$ have densities and have support on convex set, one can use for map $H(b,\cdot)$ the optimal transport map between these two distributions for the squared Euclidean cost, which is known to be a continuous function, see for instance~\cite{santambrogio2015optimal}[Sec.\ 1.7.6]. It is also possible to define a more direct transport map (which is not in general optimal), known as Dacorogna-Moser transport, see for instance~\cite{santambrogio2015optimal}[Box 4.3]. 
\end{proof}
	
\section{Approximation by Fully Connected Networks for Theorem~\ref{thm-meas-valued}} \label{sec-proof-approx-thm}

We now show how to approximate the continuous functions $f$ and $g$ used in Theorem~\ref{thm-meas-valued} by neural networks. Namely, let us first detail the real-valued case (where two Elementary Blocks are needed), namely prove that: $\forall \epsilon > 0$, there exists integers $N, p_1, p_2$ and weight matrices matrices $A_1 \in \RR^{p_1\times d}$, $A_2 \in \RR^{p_2\times N}$, $C_1 \in \RR^{N\times p_1}$, $C_2 \in \RR^{1\times p_2}$; biases $b_1 \in \RR^{p_1}$, $b_2 \in \RR^{p_2}$: i.e., there exists two neural networks
\begin{itemize}
	\item $g_\theta(x) = C_1 \la (A_1 x + b_1) : \RR^d \to \RR^N$
	\item $f_\xi(x) = C_2 \la (A_2 x + b_2) : \RR^N \to \RR$
\end{itemize}
s.t.
$\forall \al \in \Mm_+^1(\Omega), \left\lvert \Ff(\al) - f_\xi \left( \EE_{X \sim \al} \left( g_\theta(X) \right) \right) \right\rvert < \epsilon$.

Let $\epsilon > 0$. By Theorem~\ref{thm-meas-valued}, $\Ff$ can be approximated arbitrarily close (up to $\frac{\epsilon}{3}$) by a composition of functions of the form $f \left( \EE_{X \sim \al} \left( g(X) \right) \right)$. By triangular inequality, we upper-bound the difference of interest $\left\lvert \Ff(\al) - f_\xi \left( \EE_{X \sim \al} \left( g_\theta(X) \right) \right) \right\rvert$ by a sum of three terms:
\begin{itemize}
	\item $\left\lvert \Ff(\al) - f \left( \EE_{X \sim \al} \left( g(X) \right) \right) \right\rvert$
	\item $\left\lvert f \left( \EE_{X \sim \al} \left( g(X) \right) \right) - f_\xi \left( \EE_{X \sim \al} \left( g(X) \right) \right) \right\rvert$
	\item $\left\lvert f_\xi \left( \EE_{X \sim \al} \left( g(X) \right) \right) - f_\xi \left( \EE_{X \sim \al} \left( g_\theta(X) \right) \right) \right\rvert$
\end{itemize}
and bound each term by $\frac{\epsilon}{3}$, which yields the result.
The bound on the first term directly comes from theorem 1 and yields constant $N$ which depends on $\epsilon$. The bound on the second term is a direct application of the universal approximation theorem~\citep{cybenko1989approximation,leshno1993multilayer} (since $\al$ is a probability measure, input values of $f$ lie in a compact subset of $\RR^N$: $\norm{ \int_\Omega g(x) \mathrm{d}\alpha }_\infty \leq \max_{x\in\Omega} \max_i | g_i (x) |$, hence the theorem~\citep{cybenko1989approximation,leshno1993multilayer} is applicable as long as $\la$ is a nonconstant, bounded and continuous function). Let us focus on the third term.
Uniform continuity of $f_\xi$ yields the existence of $\delta > 0$ s.t. $\left\lvert \left\lvert u-v \right\rvert \right\rvert_1 < \delta$ implies $\left\lvert f_\xi(u)-f_\xi(v) \right\rvert < \frac{\epsilon}{3}$. Let us apply the universal approximation theorem: each component $g_i$ of $g$ can be approximated by a neural network $g_{\theta,i}$ up to $\frac{\delta}{N}$. Therefore:
\eq{
\begin{split}
\norm{ \EE_{X \sim \al} & \left( g(X) - g_\theta(X) \right) }_1 \\
& \leq \EE_{X \sim \al} \norm{ g(X) - g_\theta(X) }_1 \\
& \leq \sum_{i=1}^N \int_\Omega | g_i(x) - g_{\theta,i}(x) | \d\al(x) \\
& \leq N \times \frac{\delta}{N} = \delta
\end{split}
}
since $\al$ is a probability measure. Hence the bound on the third term, with $u=\EE_{X \sim \al} \left( g(X) \right)$ and $v=\EE_{X \sim \al} \left( g_\theta(X) \right)$ in the definition of uniform continuity.
We proceed similarly in the general case, and upper-bound the Wasserstein distance by a sum of four terms by triangular inequality. The same ingredients (namely the universal approximation theorem~\citep{cybenko1989approximation,leshno1993multilayer}, since all functions $f$, $g$ and $h$ have input and output values lying in compact sets; and uniform continuity) enable us to conclude.

\section{Proof of Theorem~\ref{thm-density-tensorization}} \label{sec-proof-approx-tenso}

We denote $\Cc(\Mm_+^1(\Omega))$ the space of functions taking probability measures on a compact set $\Omega$ to $\RR$ which are continuous for the weak-$*$ topology. 
We denote the set of integrals of tensorized polynomials on $\Omega$ as
\eq{
\Aa_\Omega \eqdef 
\left\{
\begin{split}
\Ff:\Mm_+^1&(\Omega) \to \RR, \exists n \in \NN, \exists \phi : \Omega^n \to \RR, \\ 
&\forall \mu \in \Mm_+^1(\Omega), \Ff(\mu) = \int_{\Omega^n} \phi \d\mu^{\otimes n} 
\end{split}
\right\}.
}
The goal is to show that $\Aa_\Omega$ is dense in $\Cc(\Mm_+^1(\Omega))$.

Since $\Omega$ is compact, Banach-Alaoglu theorem shows that $\Mm_+^1(\Omega)$ is weakly-$*$ compact. Therefore, in order to use Stone-Weierstrass theorem, to show the density result, we need to show that $\Aa_\Omega$ is an algebra that separates points, and that, for all probability measure $\al$, $\Aa_\Omega$ contains a function that does not vanish at $\al$. For this last point, taking $n=1$ and $\phi=1$ defines the function $\Ff(\al) = \int_\Omega \d\al = 1$ that does not vanish in $\al$ since it is a probability measure. 
Let us then show that $\Aa_\Omega$ is a subalgebra of $\Cc(\Mm_+^1(\Omega))$:
\begin{itemize}
	\item stability by a scalar follows from the definition of~$\Aa_\Om$;
	\item stability by sum: given $(\Ff_1,\Ff_2) \in \Aa_\Omega^2$ (with associated  functions $(\phi_1,\phi_2)$ of degrees $(n_2,n_2)$), 
		denoting $n \eqdef \max(n_1,n_2)$ and $\phi(x_1,\hdots,x_{n}) \eqdef \phi_1(x_1,\hdots,x_{n_1}) + \phi_2(x_1,\hdots,x_{n_2})$ shows that
		$\Ff_1+\Ff_2 =  \int_{\Omega^n} \phi \d\mu^{\otimes n}$ and hence $\Ff_1+\Ff_2 \in \Aa_\Omega$;
	\item stability by product: similarly as for the sum, 
		denoting this time $n = n_1+n_2$ and introducing $\phi(x_1,\hdots,x_{n}) = \phi_1(x_1,\hdots,x_{n_1})\times\phi_2(x_{n_1+1},\ldots,x_{n_3})$ shows that $\Ff=\Ff_1 \times \Ff_2 \in \Aa_\Omega$, using Fubini's theorem.
\end{itemize}
Lastly, we show the separation of points: if two probability measures $(\al,\be)$ on $\Omega$ are different (not equal almost everywhere), there exists a set $\Omega_0 \subset \Omega$ such that $\al(\Omega_0) \neq \be(\Omega_0)$; taking $n=1$ and $\phi=\mathds{1}_{\Omega_0}$, we obtain a function $\Ff \in\Aa_\Omega$ such that $\Ff(\al)\neq\Ff(\be)$.

\end{document}